\newtheorem{thm}{Theorem}[section]
\newtheorem{rem}{Remark}[section]
\newtheorem{lem}{Lemma} [section]
\newtheorem{hip}{Hypothesis}[section]
\newcommand{\R}{\mathbb{R}}
\newcommand{\MATLAB}{\textsc{Matlab}\xspace}
\begin{document}
\begin{frontmatter}
\title{Diffusion Representation for Asymmetric Kernels}

\author[1]{Alvaro Almeida Gomez} 
\emailauthor{aag182@impa.br}{A. A. Gomez}
\author[2]{Antônio J.  Silva Neto}
\emailauthor{ajsneto@iprj.uerj.br}{A. J.  Silva Neto}
\author[1,3]{Jorge P. Zubelli \corref{mycorrespondingauthor}}
\emailauthor{jorge.zubelli@ku.ac.ae}{J. P. Zubelli}
\address[1]{IMPA, Est. D. Castorina, 110, Jardim Bot\^anico, Rio de Janeiro, 22460-320, Brazil}
\address[2]{IPRJ-UERJ, R. Bonfim 25, Nova Friburgo 28625-570, Brazil}

\address[3]{Khalifa University, P.O. Box 127788, Abu Dhabi, UAE}

\cortext[mycorrespondingauthor]{Corresponding author}

\makeatletter
\numberwithin{equation}{section}
\numberwithin{figure}{section}

\begin{abstract}
We extend the diffusion-map formalism to data sets that are induced by asymmetric kernels. Analytical convergence results of the resulting expansion are proved, and an 
algorithm is proposed to perform the dimensional reduction. 
%
%

A coordinate system connected to the tensor product 
of Fourier basis is used to represent the underlying geometric structure
obtained by the diffusion-map, thus reducing the dimensionality of the data set and making use of the speedup provided
by the two-dimensional Fast Fourier Transform algorithm (2-D FFT). 

We compare our results with those obtained by other eigenvalue expansions, and verify the efficiency of the algorithms with synthetic data, as well as with real data from applications including climate change studies.
\end{abstract}
\begin{keyword}{Diffusion maps; Dimensional reduction; FFT; Asymmetric kernels}
\end{keyword}
\end{frontmatter}


\section*{Highlights}

\begin{itemize}

\item  A novel methodology for asymmetric-kernel  data  dimension reduction is developed.

\item Dimension reduction is based on the highly efficient FFT algorithm in higher dimensions.

\item Numerical evidence indicates that the tensor product of the FFT basis performance is faster than eigenvalue-based methods.

\item Geometric features of complex data sets are revealed in key examples such as the M\"obius strip and the sphere.

\item 
The methodology is employed in meteorological applications to identify regions of largest temperature variation.

\end{itemize}

\section{Introduction}

 Data compression has been studied extensively in many applications. See \cite{SLEIJPEN20101100,WU2021101}. Several dimensionality reduction algorithms are based on the spectral decomposition of symmetric linear operators which induce geometric structures in the data set. 
 A classical example of such operators is the Laplacian matrix associated to an undirected graph \cite{DAS2004715}. In  Ref. \cite{6789755}, the eigenvalue decomposition of the Laplacian matrix is used to reduce the dimensionality of data sets in such way  that the local information is preserved. The diffusion-map  approach is based on using the symmetric
 normalized Laplacian matrix  \cite{COIFMAN20065}. Compared with other dimensionality reduction methods diffusion-map assumes that the data set resides in a lower dimensional manifold, and uses approximations of the Laplace-Beltrami operator to reveal relevant parameters in the data set.

\par The spectral decomposition theorem does not hold for integral operators with asymmetric kernels. Therefore, we cannot use the diffusion-map framework to represent diffusion distances induced by asymmetric kernels. Moreover, if we use the spectral decomposition in some symmetric normalization of an asymmetric kernel, its performance requires a computational complexity $O(n^3)$, for $n \times n$ matrices.
 \par In order to reduce the above $O(n^3)$ complexity and to deal with more general  kernels, we present a new framework to represent the diffusion geometry induced by asymmetric kernels. We use the 2-D FFT to compute this  representation. The main advantage of using this representation is that compared to the eigenvector representation, the computation time decreases. In fact, for a matrix of dimension $n \times n$,  the complexity of the 2-D FFT is $O(n^2 \times log(n))$.
 The choice of  the 2-D FFT to represent the data set structure is based on the fact that the  Fourier basis diagonalizes the Laplacian  defined on the Euclidean Torus.

 \par In this paper, we deal with data sets whose structure is induced by an asymmetric kernel. Based on Refs.~\cite{HAJJI20041011, SALHOV2018324, SALHOV2015399},  we use alternative orthonormal bases to reduce the dimensionality of the data sets in such way that the geometric structure is preserved. 
 Here, we work with the representation theory of diffusion distances in the context of changing data, proposed in Ref. \cite{COIFMAN201479}. First, we find a representation form for the classical diffusion geometry, and then we  find a representation form for changing data. To do that, we start with the case $t=1$ and then extend it for any time $t$. 
\par This paper is organized as follows, in Section  \ref{histor} we give a brief exposition of the classical representation theory for diffusion distances proposed in Refs. \cite{COIFMAN20065,COIFMAN201479,Coifman7426,MARSHALL2018709}. In Section  \ref{resultados} we present our framework to represent  diffusion distances when the structure in the data set is induced by an asymmetric kernel. Finally, in Sections \ref{experimentos} and  \ref{conclu}, we show some experiments with applications and results, as well as draw some conclusions and directions for further research.

\section{Classical Diffusion-Map Theory} \label{histor}
In this section, we review some classical results on diffusion-map theory. We define  diffusion distances in a measure space, and we recall some results related to the representation of such diffusion distances.
\subsection{Diffusion distance}
Assume that our data set $(X,\mu)$ is a measure space, and let $k:X \times X \to \R_{\ge 0}$ be  a non-negative symmetric kernel, which is used to measure the local connectivity between two points $x$ and $y$.  If $X$ is also a metric space, the most classical example of these kernels is the Gaussian kernel given by
$$e^{-d^2(x,y) / 2 \sigma^2},$$
where $d$ is the distance function, and $\sigma^2$ is the scaling parameter.
We define the associated Markov kernel  $\rho(x,y)$ by
\begin{equation}
  \label{kerma}  \rho(x,y):=\frac{k(x,y)}{\sqrt{v(x)} \sqrt{v(y)}}
\mbox{ ,} \end{equation}
where $v(x)$ is the volume form defined as $v(x)=\int_{X} k(x,y) \, d\mu (y)$. Assuming that the volume form never vanishes and that $k \in  L^2(X \times X) $, then the operator $A:\, L^2(X) \to \, L^2(X)$ given by
\begin{equation}
  \label{opA}  A(f)(x):=\int_{X} \rho(x,y)\,f(y) \,d \mu (y)  , 
\end{equation}
is compact and self-adjoint. According to the spectral theorem, we can write
$$\rho(x,y)=\sum_{n \in \mathbb{N}} \lambda_{n} \phi_{n}(x) \ \phi_{n}(y)\mbox{ , } $$
where $ \{  \phi_{n}, \lambda_{n}\}$ is the spectral decomposition of the operator $A$. For any natural number $t$, we define the diffusion distance at time $t$ between two points $x$ and $y$ by
\begin{equation}
\label{didior} D^t (x,y):= {\| \, \rho ^ t(x,\cdot)-\rho ^ t(y,\cdot)\|}_{L^2(X)}
\mbox{ ,} \end{equation}
where $\rho^ t$ is the kernel of the integral operator $A^ t$. Here, $A^ t$ is the composition of the operator $A$, a total of $t$ times. The kernel $\rho^ t(x,y)$ measures the probability that the points $x$ and $y$ are connected by a path in time $t$. Observe that the distance $D^t$ is an average over all of the paths in time $t$ connecting $x$ to $y$. Therefore, the diffusion distance is robust to noisy data. We see that the quantity $D^t$ is small when there are many paths of length $t$ connecting $x$ and $y$.
Using the spectral decomposition of the operator $A$, we can write the diffusion distance as
\begin{equation}
 \label{spede} D^t (x,y)=\sqrt{\sum_{n \in \mathbb{N}} \lambda_{n}^t (\phi_{n}(x)-\phi_{n}(y))^2}
\mbox{ .} \end{equation}
\par The above expression allows us to reduce the dimensionality of the diffusion geometry, namely, we embed our data set in a lower dimensional space using the diffusion-map $\psi_{k}^t : X \, \to \mathbb{R}^k $, where 
$$\psi_{k}^t(x)=( \lambda_{i}^{t/2} \, \phi_{i}(x))_{i=1}^k \mbox{ .}$$

\subsection{Diffusion-map for changing data}
The diffusion distance for changing data proposed in Ref. \cite{COIFMAN201479} compares data points between parametric data sets. We define $X_\alpha$ as the data set $X$ endowed with the kernel structure $k_\alpha$. As above, for each kernel $k_\alpha$, we consider the associated Markov kernel $\rho_\alpha$ defined in Eq.~(\ref{kerma}), and the operator $A_\alpha$ as in Eq.~(\ref{opA}). To compare the data structure $X_\alpha$ with $X_\beta$, we define the dynamic diffusion distance  $D^t:  X_\alpha \times  X_\beta \to \R_{\ge 0} $  by
$$ D^t(x_\alpha,y_\beta)={\| \, \rho_\alpha ^ t(x,\cdot)-\rho_\beta ^ t(y,\cdot)\|}_{L^2(X)} \mbox{ . } $$
Furthermore, in Ref. \cite{COIFMAN201479}, the global diffusion distance  at time $t$ is defined by
$$\mathbb{D}^t(X_\alpha,X_\beta)=\sqrt{\int_{X}(D^t(x_\alpha,x_\beta))^2 \, d\mu(x)} \mbox{ . } $$
The global diffusion distance measures the change from the data structure $X_\alpha$ to  $X_\beta$.
Under mild assumptions on the data set $X$ and the family of kernels $\{k_\alpha\}$, the dynamic diffusion distance and the global diffusion distance can be computed using the spectral decomposition of the operators $A_\alpha$, as in Eq.~(\ref{spede}). See Ref. \cite{COIFMAN201479}.

\section{Diffusion Representation for Asymmetric Kernels} \label{resultados}

Assume that $(X, \mu)$ is a measure space, and consider an asymmetric kernel $k$, which is any square integrable measurable function  $k:X \times X \to \R_{\ge 0}$. Observe that here we do not require that the kernel $k$ be symmetric. As example of these kernels, we assume that $(X,d)$ is a metric space, and consider the weighted Gaussian kernel defined by
$$w(x,y) \; e^{-d^2(x,y)/2 \sigma^2}\mbox{ , } $$
where $w$ is the weight function. Weighted Gaussian kernels measure how information is distributed locally from $x$ to $y$.
Note that the distribution of the information may not be uniform.
In order to deal with more general models, we do not use the Markov normalization given by Eq.~(\ref{kerma}) to define the diffusion distance, instead, we use the diffusion kernel $k$ to define it. More specifically, we work with the diffusion distance at time $t$ given by 
\begin{equation} \label{difudistnu}
  D^t (x,y):= {\| \, k^ t(x,\cdot)-k ^ t(y,\cdot)\|}_{L^2(X)}
\mbox{ ,} \end{equation}
where $k^ t$ is the kernel of the operator $A^ t,$ and $A$ is the integral operator defined as
\begin{equation}
  \label{opeA}  A(f)(x):=\int_{X} k(x,y)\,f(y) \,d \mu (y) .   
\end{equation}

\subsection{Diffusion representation for t=1} 
We now design a representation for the diffusion distance given by Eq.~(\ref{difudistnu}), where  $k$ is an asymmetric kernel. Suppose that $\{W_{m_1}\}_{m_1  \in  \mathbb{Z}}$ and $\{W_{m_2}\}_{m_2  \in  \mathbb{Z}}$ are two orthonormal bases of $L^2 (X),$ and that there exists a positive constant $M$ such that for any $m_1  \in  \mathbb{Z},$ and all $x \in X$, 
$$ \|W_{m_1}(x)\| \le M.$$
\par We recall that in such case the tensor product $\{W_{m_1} \otimes W_{m_2}  \}_ {(m_1,m_2)  \in  \mathbb{Z} \times \mathbb{Z}}$ defined by
\begin{equation}
    \label{tensorbasis}W_{m_1} \otimes W_{m_2} (x,y)= W_{m_1}(x) \,{W}_{m_2}(y),
\end{equation}
is an orthonormal basis of  $L^2 (X \times X)$, (for more details, see \cite{functio,grafakos2014classical,steinsingu}).
We note, in passing, that our approach also works if the $L^2 (X)$ space is finite dimensional, in which case $\mathbb{Z}$ should be substituted by a finite index set.
To develop our theory we assume the following hypothesis on the kernel $k$.
\begin{hip}
\label{hipo}
Suppose that $k \in L^2 (X \times X),$ and for a.e $x\in X,$ the kernel function $k(x, \cdot)$ belongs to the space $L^2 (X)$.
\end{hip} 
If we assume the above hypothesis, we can use the  basis given by Eq.~(\ref{tensorbasis}) to write the kernel $k(x,y)$ as
\begin{equation}
\label{fourep}
    k(x,y)= \sum_{(m_1,m_2)  \in  \mathbb{Z} \times \mathbb{Z}} a_{(m_1 , m_2 )} \, W_{m_1} \otimes W_{m_2} (x,y),
\end{equation}
where $a_{(m_1 , m_2 )}$ are the coefficients. Using this decomposition we obtain a representation form for the diffusion distance at time $t=1$.

\begin{thm}[Diffusion representation for $t=1$] \label{difurepteo}
Assume that the kernel $k$ satisfies the Hypothesis  \ref{hipo}, and that the  representation of $k$ in the coordinate system (\ref{tensorbasis}) is given by Eq.~(\ref{fourep}). If the coefficients satisfy the summability condition
\begin{equation}
    \label{condilun}\sum_{(m_1,m_2)  \in  \mathbb{Z} \times \mathbb{Z}} \| a(m_1,m_2) \| < \infty,
\end{equation}
then the diffusion distance  at time $t=1$ has the representation form
\begin{equation} \label{reprefor}
(D^1 (x,y))^2=\sum_{m_2 \in  \mathbb{Z}} \| \sum_{m_1 \in  \mathbb{Z}}  a(m_1,m_2) \, ( W_{m_1}(x)-W_{m_1}(y))\| ^2
\mbox{ .} \end{equation}
\end{thm}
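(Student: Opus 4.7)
The plan is to rewrite $k(x,\cdot)$ as a one-dimensional Fourier expansion in the basis $\{W_{m_2}\}$ whose coefficients depend on $x$, and then apply Parseval's identity to the difference $k(x,\cdot)-k(y,\cdot)$.

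First, for each fixed $x\in X$ I would define
\[
c_{m_2}(x)\;:=\;\sum_{m_1\in\mathbb{Z}} a(m_1,m_2)\,W_{m_1}(x).
\]
The uniform bound $\|W_{m_1}(x)\|\le M$ combined with the summability hypothesis \eqref{condilun} gives $\sum_{m_1}|a(m_1,m_2)\,W_{m_1}(x)|\le M\sum_{m_1}|a(m_1,m_2)|$, which is in $\ell^1(\mathbb{Z}\times\mathbb{Z})$ after summing in $m_2$. Hence the inner series defining $c_{m_2}(x)$ converges absolutely, and interchange with any outer operation is legitimate.

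Next I would show that, for a.e.\ $x$, the expansion \eqref{fourep} may be regrouped as
\[
k(x,\cdot)\;=\;\sum_{m_2\in\mathbb{Z}} c_{m_2}(x)\,W_{m_2}(\cdot),
\]
with convergence in $L^2(X)$. This follows from Hypothesis \ref{hipo} together with the fact that $\{W_{m_1}\otimes W_{m_2}\}$ is an orthonormal basis of $L^2(X\times X)$, so that the sum in \eqref{fourep} converges in $L^2(X\times X)$; absolute summability of the coefficients upgrades this to unconditional convergence and allows the partial Fubini-type rearrangement performed above. Subtracting the analogous expansion at $y$ yields
\[
k(x,\cdot)-k(y,\cdot)\;=\;\sum_{m_2\in\mathbb{Z}}\bigl(c_{m_2}(x)-c_{m_2}(y)\bigr)\,W_{m_2}(\cdot).
\]

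Finally, since $\{W_{m_2}\}$ is orthonormal in $L^2(X)$, Parseval's identity gives
\[
\bigl(D^1(x,y)\bigr)^2\;=\;\|k(x,\cdot)-k(y,\cdot)\|_{L^2(X)}^2\;=\;\sum_{m_2\in\mathbb{Z}}\bigl\|c_{m_2}(x)-c_{m_2}(y)\bigr\|^2,
\]
and substituting the definition of $c_{m_2}$ yields the announced formula \eqref{reprefor}.

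The only real obstacle is a bookkeeping one: justifying the switch from the double expansion \eqref{fourep}, which a priori only converges in $L^2(X\times X)$, to the slice-wise $L^2(X)$ expansion needed for Parseval. The absolute-summability assumption \eqref{condilun} together with the $M$-uniform bound on the $W_{m_1}$ handles precisely this step, so once that Fubini-type rearrangement is written out cleanly, the rest of the argument is a direct application of Parseval.
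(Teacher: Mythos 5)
Your proposal is correct and follows essentially the same route as the paper's proof: regroup the tensor expansion of $k(x,\cdot)-k(y,\cdot)$ as a series in $\{W_{m_2}\}$ with $x$-dependent coefficients and apply Parseval's identity with respect to that orthonormal basis. The only difference is that you spell out the justification of the Fubini-type regrouping (via the uniform bound $M$ and the absolute summability condition), a step the paper's proof passes over in one sentence.
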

\begin{proof}
Using the summability condition of Eq.~(\ref{condilun}), we can write the function $k(x,\cdot)-k(y,\cdot) $ as
$$ \sum_{m_2 \in  \mathbb{Z}}\left(\sum_{m_1 \in  \mathbb{Z}} a(m_1,m_2) \, ( W_{m_1}(x)-W_{m_1}(y)) \right) W_{m_2}(\cdot).$$
Since the set $\{W_{m_2}\}$ is an orthonormal basis, we conclude that 
$$(D^1 (x,y))^2=\sum_{m_2 \in  \mathbb{Z}} \| \sum_{m_1 \in  \mathbb{Z}}   a(m_1,m_2) \, ( W_{m_1}(x)-W_{m_1}(y))\| ^2.$$
\end{proof}
For practical purposes, we do not use the representation formula given in Theorem~\ref{difurepteo} to approximate the diffusion distance, because this representation includes two sums with many terms. Instead, we use an approximation involving sums of few terms, this is established in Theorem~\ref{teoprin}. In order to prove this theorem, we first prove an auxiliary lemma.
\begin{lem} [Approximation lemma]  \label{aproximalem}  Consider the function  $$f_{k_1,k_2}:L^{1}(\mathbb{Z} \times \mathbb{Z} ) \times L^{1}(\mathbb{Z} \times \mathbb{Z} ) \times X \times X \to \R_{\ge 0},$$
defined by 
$$ f_{k_1,k_2}(a_1,a_2,x,y):= \sum_{\|m_2\|\le k_2} \| \sum_{\|m_1\|\le k_1}  a_1(m_1,m_2) \, W_{m_1} (x)-a_2(m_1,m_2) \, W_{m_1}(y) \| ^2 \mbox{ .}$$
Suppose that $a_1$ and $a_2$ are  sequences in $L^{1}(\mathbb{Z} \times \mathbb{Z})$, then for each $\delta >0,$ there exist  positive integers $\overline{k_1}$ and $\overline{k_2}$, such that if  $\overline{k_1} \le k_1,$  $\overline{k_2} \le k_2,$  and if  $x,y \in X,$ the following inequality holds
$$\|f_{k_1,k_2}(a_1,a_2,x,y)- f(a_1,a_2,x,y)\| < \delta\mbox{ , }$$
where $f(a_1,a_2,x,y)=\lim_{k\to\infty} f_{k,k}(a_1,a_2,x,y).$

\end{lem}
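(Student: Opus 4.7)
The plan is to bound $\|f_{k_1,k_2}(a_1,a_2,x,y)-f(a_1,a_2,x,y)\|$ by splitting it into an \emph{inner approximation} piece (where the outer sum is truncated at $k_2$ but the inner sums stop at $k_1$ instead of going to the limit) and an \emph{outer tail} piece (the terms with $\|m_2\|>k_2$), majorizing each by a quantity independent of $(x,y)$ that tends to zero as $k_1,k_2\to\infty$. Abbreviate
$$g_{k_1,m_2}(x,y):=\sum_{\|m_1\|\le k_1}\bigl[a_1(m_1,m_2)W_{m_1}(x)-a_2(m_1,m_2)W_{m_1}(y)\bigr],$$
and let $g_{\infty,m_2}(x,y)$ denote its limit as $k_1\to\infty$, so that $f_{k_1,k_2}=\sum_{\|m_2\|\le k_2}\|g_{k_1,m_2}\|^2$ and $f=\sum_{m_2\in\mathbb{Z}}\|g_{\infty,m_2}\|^2$ (the identification of $\lim_k f_{k,k}$ with this double sum will follow from dominated convergence, using the envelope $b$ introduced next).

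Set $\alpha(m_2):=\sum_{m_1}\|a_1(m_1,m_2)\|$ and $\beta(m_2):=\sum_{m_1}\|a_2(m_1,m_2)\|$. The hypothesis $\|W_{m_1}(x)\|\le M$ yields the uniform-in-$(x,y)$ estimates
$$\|g_{\infty,m_2}(x,y)\|,\ \|g_{k_1,m_2}(x,y)\|\ \le\ M\bigl(\alpha(m_2)+\beta(m_2)\bigr)=:b(m_2),$$
$$\|g_{k_1,m_2}(x,y)-g_{\infty,m_2}(x,y)\|\ \le\ M\sum_{\|m_1\|>k_1}\bigl(\|a_1(m_1,m_2)\|+\|a_2(m_1,m_2)\|\bigr)=:c_{k_1}(m_2).$$
Since $a_1,a_2\in L^1(\mathbb{Z}\times\mathbb{Z})$, Fubini gives $\alpha,\beta\in L^1(\mathbb{Z})$, and any nonnegative summable sequence is automatically bounded (by its own $L^1$-norm) and hence in $L^2$ as well; consequently $b\in L^1\cap L^2\cap L^\infty$.

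Now choose $\overline{k_2}$ so that $\sum_{\|m_2\|>\overline{k_2}}b(m_2)^2<\delta/2$, possible because $b\in L^2$. Then choose $\overline{k_1}$ so that $2\|b\|_{L^\infty}\sum_{m_2}c_{k_1}(m_2)<\delta/2$ for all $k_1\ge\overline{k_1}$; this is possible because $\sum_{m_2}c_{k_1}(m_2)$ equals the $L^1$-tail of $M(\|a_1\|+\|a_2\|)$ on $\{\|m_1\|>k_1\}\times\mathbb{Z}$, which vanishes as $k_1\to\infty$. Neither threshold depends on $x$ or $y$. Finally decompose
$$f_{k_1,k_2}-f=\sum_{\|m_2\|\le k_2}\bigl(\|g_{k_1,m_2}\|^2-\|g_{\infty,m_2}\|^2\bigr)-\sum_{\|m_2\|>k_2}\|g_{\infty,m_2}\|^2,$$
and apply the identity $\bigl|\|u\|^2-\|v\|^2\bigr|\le\|u-v\|(\|u\|+\|v\|)\le 2b(m_2)c_{k_1}(m_2)$ termwise in the first sum to obtain $\|f_{k_1,k_2}-f\|\le 2\sum_{m_2}b(m_2)c_{k_1}(m_2)+\sum_{\|m_2\|>k_2}b(m_2)^2<\delta$ whenever $k_1\ge\overline{k_1}$ and $k_2\ge\overline{k_2}$.

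The main obstacle is ensuring every majorant is independent of $(x,y)$; the pointwise bound $\|W_{m_1}(\cdot)\|\le M$ from the paragraph preceding Hypothesis~\ref{hipo} is exactly what makes this possible, and it is also what permits one to identify the diagonal limit $\lim_k f_{k,k}$ with the iterated double sum used to define $f$. Once the uniform envelopes $b$ and $c_{k_1}$ are available, the two truncations decouple cleanly and the rest is bookkeeping.
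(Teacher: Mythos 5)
Your proof is correct and follows essentially the same route as the paper's: split the error into the inner truncation (the terms with $\|m_1\|>k_1$ and $\|m_2\|\le k_2$) and the outer tail ($\|m_2\|>k_2$), and control both uniformly in $(x,y)$ via the bound $\|W_{m_1}\|\le M$ and the $\ell^1$ summability of the coefficients. Your termwise use of $\bigl|\|u\|^2-\|v\|^2\bigr|\le\|u-v\|\,(\|u\|+\|v\|)$ with the envelopes $b$ and $c_{k_1}$ is in fact a more careful rendering of the estimate than the paper's, which bounds the whole difference by the square of the $\ell^1$ tail alone (omitting the bounded factor $\|u\|+\|v\|$), an inequality that is not literally valid though the conclusion is unaffected.
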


\begin{proof}
Let $b(m_1,m_2)$ be defined by  $$b(m_1,m_2)=a_1(m_1,m_2) \,  W_{m_1}(x) -a_2(m_1,m_2) W_{m_1}(y).$$ 
 Since the  $L^{2}(\mathbb{Z} \times \mathbb{Z} )$  norm is smaller than, or equal to, the $L^{1}(\mathbb{Z} \times \mathbb{Z} )$  norm, we obtain that the expression $\|f_{k_1,k_2}(a_1,a_2,x,y)- f(a_1,a_2,x,y)\|$ is bounded from above by
$$ \left(  \sum_{\|m_2\|\le k_2} \sum_{\|m_1\|> k_1}\| b(m_1,m_2) \|  + \sum_{\|m_2\|> k_{2} }  \sum_{m_1  \in  \mathbb{Z}} \|b(m_1,m_2) \| \right)^2  . $$
The above expression is dominated by
$$  \left( M \sum_{(m_1,m_2) \notin B(0,k_1) \times B(0,k_2) }  \| a_1(m_1,m_2)\|+\| a_2(m_1,m_2)\| \right)^2.$$
Therefore, by Assumption (\ref{condilun}), we conclude that for a given $\delta > 0,$ we can take $k_1$ and $k_2$ large, such that the above term is less than or equal to $\delta.$ 
\end{proof}
As a consequence of the above lemma, we prove the following theorem, which states that we can approximate the diffusion distance with finite sums.

\begin{thm} \label{teoprin}
Assume that the kernel $k$ satisfies the same hypotheses of Theorem~\ref{difurepteo}. Then, for each $\delta >0,$ there exist  positive integers $\overline{k_1}$ and $\overline{k_2}$ such that for any $\overline{k_1} \le k_1,$ and $\overline{k_2} \le k_2,$ and all $x,y \in X$, the following inequality holds
$$ |f_{k_1,k_2}(a_1,a_1,x,y)-(D^1 (x,y))^2| \le \delta \mbox{ .}$$
\end{thm}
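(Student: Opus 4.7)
The plan is to recognize that Theorem~\ref{teoprin} is essentially the Approximation Lemma specialized to the case $a_1=a_2=a$, where $a(m_1,m_2)$ denotes the coefficient sequence from Eq.~(\ref{fourep}). First I would verify that Hypothesis~\ref{hipo} together with the summability condition (\ref{condilun}) places $a$ in $L^1(\mathbb{Z}\times\mathbb{Z})$, so the lemma is applicable. Then I would note that with this choice, the expression $f_{k_1,k_2}(a,a,x,y)$ is exactly the finite-sum object appearing in the statement of the theorem.

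The key step is to identify the limit $f(a,a,x,y)=\lim_{k\to\infty} f_{k,k}(a,a,x,y)$ with $(D^1(x,y))^2$. This identification is where the representation formula (\ref{reprefor}) from Theorem~\ref{difurepteo} enters: the latter asserts
\[
(D^1(x,y))^2=\sum_{m_2\in\mathbb{Z}}\Bigl\|\sum_{m_1\in\mathbb{Z}} a(m_1,m_2)\,(W_{m_1}(x)-W_{m_1}(y))\Bigr\|^2,
\]
while $f_{k,k}(a,a,x,y)$ is the natural truncation of exactly this double series. I would justify the interchange of limit and outer sum by noting that the boundedness $\|W_{m_1}(x)\|\le M$ together with (\ref{condilun}) gives a uniform $\ell^1$-majorant on the inner sums, so monotone (or dominated) convergence over $m_1$ followed by over $m_2$ yields $f(a,a,x,y)=(D^1(x,y))^2$ pointwise in $(x,y)$.

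Once that identification is in hand, the proof is concluded by invoking Lemma~\ref{aproximalem} directly: for the given $\delta>0$ there exist $\overline{k_1},\overline{k_2}$ such that whenever $k_1\ge\overline{k_1}$ and $k_2\ge\overline{k_2}$,
\[
\bigl|f_{k_1,k_2}(a,a,x,y)-f(a,a,x,y)\bigr|<\delta,
\]
uniformly in $x,y\in X$, and the left-hand side is precisely $|f_{k_1,k_2}(a,a,x,y)-(D^1(x,y))^2|$.

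The main obstacle I anticipate is the rigorous justification that $\lim_{k\to\infty} f_{k,k}(a,a,x,y)$ coincides with the iterated sum in (\ref{reprefor}), since in principle a diagonal truncation and a successive iterated truncation need not agree. However, the uniform bound produced inside the proof of the Approximation Lemma, namely $\bigl(M\sum_{(m_1,m_2)\notin B(0,k)\times B(0,k)}2\|a(m_1,m_2)\|\bigr)^2$, shows that the diagonal tail vanishes uniformly in $(x,y)$ as $k\to\infty$, and the same bound applied to the rectangular truncations inside (\ref{reprefor}) collapses the two limits to a common value; this is the only substantive estimate beyond citing the previous results.
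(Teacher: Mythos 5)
Your proposal is correct and follows essentially the same route as the paper, whose proof is simply the observation that the statement follows from Theorem~\ref{difurepteo} combined with Lemma~\ref{aproximalem} applied with $a_1=a_2=a$. The extra care you take in identifying $\lim_{k\to\infty}f_{k,k}(a,a,x,y)$ with the iterated sum in Eq.~(\ref{reprefor}) via the uniform tail bound is a detail the paper leaves implicit, but it is the right justification and does not constitute a different argument.
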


\begin{proof}
The proof is a consequence of Theorem~\ref{difurepteo} together with Lemma~\ref{aproximalem}.
\end{proof}

\subsection{Diffusion representation for arbitrary time}
Suppose that $t$ is a positive integer denoting an arbitrary time. We now use the coordinate system of Eq.~(\ref{tensorbasis}) to find the representation form for the kernel $k^{t+1}$  in terms of the coefficients $a(m_1,m_2)$. Let $k$ be an asymmetric kernel, and suppose that for any $1\le j \le t,$ the kernel $k^j$ satisfies all the hypotheses of Theorem~\ref{aproximalem}. Under this assumption, we can use the Fubini's theorem to write recursively the kernel $k^{t+1}$  of the operator $A^{t+1}$ as
\begin{equation}
    \label{recufor} k^{t+1}(x,y)=<k(x, \cdot),k^{t}( \cdot , y )>_{L^2(X)} 
\mbox{ .} \end{equation}
\par Assuming that the kernel $k^{t}$ has the series representation
$$k^{t}(x,y)= \sum_{(m_1,m_2)  \in  \mathbb{Z} \times \mathbb{Z}} a_{(m_1 , m_2 )}^{t} \,W_{m_1} \otimes W_{m_2} (x,y) \mbox{. }$$
then by Eq.~(\ref{recufor}) we obtain that 
\begin{equation}
    \label{recuco} a_{( m_1 , m_2 )}^{t+1}=\sum_{k  \in  \mathbb{Z}}  a_{( m_1, k )} \, a_{( k , m_2)}^{t}
\mbox{ .} \end{equation}
Recursively, we obtain that the expression for the coefficients of the kernel $k^{t+1}$ 
\begin{equation} 
\label{expansion}
    a_{( m_1 , m_2 )}^{t+1}= \sum_{n_1  \in  \mathbb{Z}} \sum_{n_2  \in  \mathbb{Z}} \sum_{n_3  \in  \mathbb{Z}}\dots  \sum_{n_t  \in  \mathbb{Z}} a_{( m_1 , n_1)} \, a_{( n_1 ,n_2)}   a_{( n_2 ,n_3)}\dots a_{( n_t ,m_2)}
\mbox{ .} \end{equation}
Again, the above expression  contains infinitely many sums. We now prove that  we can approximate the coefficients of $k^{t}$ using finite sums. The following lemma establishes this result.

\begin{lem} \label{poqueal}
For any $\delta > 0$, there exist $k_0$, such that for any $k_0\le k$   we have 
$$\|a_{}^{t+1}-h^{t+1}_k(a) \|_{L^1(\mathbb{Z} \times \mathbb{Z})} \le \delta \mbox{ ,}$$
where $h^{t+1}_k(a) (m_1,m_2)$ is the finite sum
$$ h^{t+1}_k(a) (m_1,m_2)= \sum_{\|n_1\|\le k} \sum_{\|n_2\|\le k} \sum_{\|n_3\|\le k}\dots  \sum_{\|n_t\|\le k} a_{( m_1 , n_1)} \, a_{( n_1 ,n_2)}   a_{( n_2 ,n_3)}\dots a_{( n_t ,m_2)}\mbox{ . } $$ 
\end{lem}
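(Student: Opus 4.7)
The plan is to deduce the lemma from the dominated convergence theorem, applied with counting measure on $\mathbb{Z} \times \mathbb{Z}$, where the dominating function is an iterated ``matrix product'' of $|a|$ with itself $t+1$ times. First I would establish that the bilinear operation
$$(a \star b)(m_1,m_2) := \sum_{n \in \mathbb{Z}} a(m_1,n)\, b(n,m_2)$$
is well defined on $L^{1}(\mathbb{Z} \times \mathbb{Z})$ and submultiplicative, that is, $\|a \star b\|_{L^1} \le \|a\|_{L^1}\,\|b\|_{L^1}$; this is a direct Tonelli computation after inserting absolute values. Writing $|a|$ for the pointwise absolute value of $a$, submultiplicativity shows that the iterated product
$$|a|^{(t+1)}(m_1,m_2) := \sum_{n_1,\ldots,n_t \in \mathbb{Z}} |a(m_1,n_1)|\,|a(n_1,n_2)| \cdots |a(n_t,m_2)|$$
belongs to $L^{1}(\mathbb{Z} \times \mathbb{Z})$ with $\| |a|^{(t+1)} \|_{L^1} \le \|a\|_{L^1}^{t+1} < \infty$, the finiteness of $\|a\|_{L^1}$ being the summability assumption inherited from Theorem~\ref{difurepteo}.

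Next, for each fixed pair $(m_1,m_2)$ the $t$-fold series defining $a^{t+1}_{(m_1,m_2)}$ in Eq.~(\ref{expansion}) is absolutely convergent (its terms are dominated pointwise by those of $|a|^{(t+1)}(m_1,m_2) < \infty$), and the finite partial sums $h^{t+1}_k(a)(m_1,m_2)$ obtained by restricting each $n_i$ to $\|n_i\| \le k$ converge to it as $k \to \infty$ by the scalar dominated convergence theorem on $\mathbb{Z}^{t}$. To upgrade this pointwise convergence to convergence in $L^{1}(\mathbb{Z} \times \mathbb{Z})$ I would invoke the dominated convergence theorem once more on the outer pair of indices: the pointwise bound
$$\bigl|a^{t+1}_{(m_1,m_2)} - h^{t+1}_k(a)(m_1,m_2)\bigr| \le 2\,|a|^{(t+1)}(m_1,m_2)$$
supplies an $L^{1}$ dominating function, so the summed difference tends to zero as $k \to \infty$. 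Given $\delta > 0$, picking any $k_0$ large enough to force this $L^{1}$ norm below $\delta$ for all $k \ge k_0$ closes the argument.

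The main obstacle I expect is the rigor of the first step: the submultiplicativity of $\star$ (and its iteration yielding $|a|^{(t+1)} \in L^{1}$) relies on commuting $(t+1)$-fold infinite summations, which is legitimate only after noting that all the integrands are nonnegative so Tonelli applies. Once absolute summability has been secured at every stage, the two-layer dominated convergence argument is entirely routine.
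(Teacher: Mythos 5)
Your argument is correct, but it follows a genuinely different route from the paper's. The paper proves Lemma~\ref{poqueal} by induction on $t$, using the recursion of Eq.~(\ref{recuco}): the base case bounds $\|a^{2}-h^{2}_k(a)\|_{L^1}$ by $\|a\|_{L^1}$ times the tail sum $\sum_{\|n_1\|\ge k}\sum_{m_1}\|a(m_1,n_1)\|$, and the inductive step splits $a^{t+1}-h^{t+1}_k(a)$ into a term controlled by the inductive hypothesis $\|a^{t}-h^{t}_k(a)\|_{L^1}\le\delta$ and a tail term controlled again by the summability condition~(\ref{condilun}). You instead treat all $t$ at once: you establish that the convolution-type product is submultiplicative on $L^{1}(\mathbb{Z}\times\mathbb{Z})$ (a Banach-algebra observation justified by Tonelli), deduce that the absolute majorant $|a|^{(t+1)}$ is summable, and then apply dominated convergence twice --- once over the inner indices $(n_1,\dots,n_t)$ for pointwise convergence and once over the outer pair $(m_1,m_2)$ for $L^1$ convergence. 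Both proofs ultimately rest on the same fact (absolute convergence of the $(t+1)$-fold sum in Eq.~(\ref{expansion}) granted by~(\ref{condilun})), but your version is arguably cleaner: it avoids the bookkeeping of the induction and sidesteps a small blemish in the paper's inductive step (where $\|h^{t}_k(a)\|_{L^1}$ is bounded by $\delta+\|a\|_{L^1}$ rather than $\delta+\|a\|_{L^1}^{t}$). What the paper's induction buys in exchange is a semi-explicit, quantitative tail estimate expressed directly in terms of partial sums of $a$, which meshes with how $h^{t+1}_k$ is actually computed recursively in the algorithm; your dominated-convergence argument yields existence of $k_0$ without any rate.
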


\begin{proof}
We prove by induction over $t$, for $t=1$ is clear since 
\begin{align*}
\sum_{(m_1,m_2)  \in  \mathbb{Z} \times \mathbb{Z} }\lvert \lvert a_{( m_1 , m_2 )}^{2}-h^{2}_k (m_1,m_2)\rvert \rvert & \le \sum_{\lvert \lvert n_1 \rvert \rvert \ge k} \sum_{m_1  \in  \mathbb{Z} } \sum_{m_2  \in  \mathbb{Z} }  \lvert \lvert a_{( m_1,n_1)} a_{(n_1,m_2)} \rvert \rvert  \\
 &\le  \lvert \lvert a \rvert \rvert_{L^1} \sum_{\lvert \lvert n_1 \rvert \rvert \ge k}  \sum_{m_1  \in  \mathbb{Z}  } \lvert \lvert a_{( m_1,n_1)} \rvert \rvert  \mbox{ .}
\end{align*}
Then, by Assumption  \ref{condilun}, we have that for $k$ large the above inequality is less than or equal to $\delta.$ We now assume that the claim holds for $t$, and we prove that also holds for $t+1$. For $k$ large we have that
$$\|a^{t}-h^{t}_k(a) \|_{L^1(\mathbb{Z} \times \mathbb{Z})} \le \delta.$$
The above inequality implies 
$$\|h^{t}_k(a) \|_{L^1(\mathbb{Z} \times \mathbb{Z})} \le (\delta + \| a \|_{L^1(\mathbb{Z} \times \mathbb{Z})} ) \mbox{ .}$$
Furthermore, by Eq.~(\ref{recuco}), we have that $$\|a_{( m_1 , m_2 )}^{t+1}-h^{t+1}_k (a) (m_1,m_2)\|,$$ is less or equal to
$$ \sum_{i\in I} \| a_{( m_1 ,i )}\| \, \|a_{( i ,m_2 )}^{t}-h^{t}_k (a) (i,m_2)\| + \sum_{\|i\|\ge k} \| a_{( m_1 ,i )}\| \|h^{t}_k (a) (i,m_2)\| \mbox{ .}$$
 Using the above inequalities, we obtain the following estimate 
$$\|a^{t+1}- h^{t+1}_k(a)\|_{L^1} \le \delta \|a\|_{L^1} + (\delta + \| a \|_{L^1}) \sum_{\lvert \lvert i \rvert \rvert \ge k}  \sum_{m_1  \in  \mathbb{Z}  } \lvert \lvert a_{( m_1,i)} \rvert \rvert\mbox{ . } $$
Therefore, by Assumption~(\ref{condilun}), we conclude that the claim holds for $t+1$. \end{proof}
We now use the above result to design a representation for the diffusion distance at time $t+1$. This representation is based on finite sums of the coefficients $a_{(m_1,m_2)}$. We establish this result in Theorem~\ref{teoreprecualtiempo}, the proof involves the following technical lemma.
\begin{lem}[Continuity] \label{continuitylemma}
Consider the function  $f_{k_1,k_2}$ as in Lemma~\ref{aproximalem}, and suppose that $a$ and $b$ are two sequences in $L^{1}(\mathbb{Z} \times \mathbb{Z} ).$ Then, for any positive number $\epsilon$, there exists a positive number $\delta$, such that for any pair of sequences $c$ and $d$ satisfying $$\|a-c\|_{L^{1}(\mathbb{Z} \times \mathbb{Z} )} \le \delta \hspace{1cm} \mathrm{and} \hspace{1cm}  \|b-d\|_{L^{1}(\mathbb{Z} \times \mathbb{Z} )} \le \delta \mbox{, }$$ then, the following inequality holds for all  $x,y \in X,$ and for all positive integers $k_1, k_2,$
$$ |f_{k_1,k_2}(a,b,x,y)-f_{k_1,k_2}(c,d,x,y) |\le \epsilon \mbox{. }$$
\end{lem}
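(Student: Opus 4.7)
The approach is to reduce the statement to an $\ell^{1}$ estimate on the coefficient sequences, using two elementary ingredients: the factorization $|\alpha^{2}-\beta^{2}|\le(\alpha+\beta)|\alpha-\beta|$, and the uniform bound $\|W_{m_{1}}(x)\|\le M$ from the standing hypothesis. Writing each summand in the definition of $f_{k_{1},k_{2}}$ as $\|G_{m_{2}}(a,b)\|^{2}$ with $G_{m_{2}}(a,b):=\sum_{\|m_{1}\|\le k_{1}}\bigl(a(m_{1},m_{2})W_{m_{1}}(x)-b(m_{1},m_{2})W_{m_{1}}(y)\bigr)$, the factorization yields, term by term,
\begin{equation*}
\bigl|\,\|G_{m_{2}}(a,b)\|^{2}-\|G_{m_{2}}(c,d)\|^{2}\,\bigr|\le\bigl(\|G_{m_{2}}(a,b)\|+\|G_{m_{2}}(c,d)\|\bigr)\,\|G_{m_{2}}(a-c,b-d)\|,
\end{equation*}
so that only the $m_{2}$-sum of this product needs to be controlled, uniformly in $x,y,k_{1},k_{2}$.

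For that control I would dominate each factor via the triangle inequality and the pointwise bound $\|W_{m_{1}}(\cdot)\|\le M$: writing $r_{u}(m_{2}):=\sum_{m_{1}\in\mathbb{Z}}\|u(m_{1},m_{2})\|$ for the full row-sum of a sequence $u$ (the truncation $\|m_{1}\|\le k_{1}$ only decreases it), one has $\|G_{m_{2}}(u,v)\|\le M\,(r_{u}(m_{2})+r_{v}(m_{2}))$. Substituting and summing over $\|m_{2}\|\le k_{2}$ produces the geometry-free estimate
\begin{equation*}
\bigl|f_{k_{1},k_{2}}(a,b,x,y)-f_{k_{1},k_{2}}(c,d,x,y)\bigr|\le M^{2}\!\!\sum_{m_{2}\in\mathbb{Z}}\bigl(r_{a-c}(m_{2})+r_{b-d}(m_{2})\bigr)\bigl(r_{a}(m_{2})+r_{b}(m_{2})+r_{c}(m_{2})+r_{d}(m_{2})\bigr),
\end{equation*}
in which every trace of $x,y,k_{1},k_{2}$ has vanished.

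The last step is a routine $\ell^{\infty}$-against-$\ell^{1}$ pairing. Each row sum $r_{a-c}(m_{2})$ is bounded by $\|a-c\|_{L^{1}}\le\delta$, hence $r_{a-c}(m_{2})+r_{b-d}(m_{2})\le 2\delta$ uniformly in $m_{2}$, while the $m_{2}$-sum of the second factor equals $\|a\|_{L^{1}}+\|b\|_{L^{1}}+\|c\|_{L^{1}}+\|d\|_{L^{1}}\le 2\|a\|_{L^{1}}+2\|b\|_{L^{1}}+2\delta$. One then obtains an upper bound of the form $C\,\delta\bigl(\|a\|_{L^{1}}+\|b\|_{L^{1}}+\delta\bigr)$, which is $\le\epsilon$ for $\delta$ chosen small in terms of $\epsilon$, $\|a\|_{L^{1}}$, $\|b\|_{L^{1}}$ and $M$.

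The main obstacle I anticipate is the insistence on \emph{simultaneous} uniformity in $x,y\in X$ and in both truncation parameters $k_{1},k_{2}$; if either were allowed to drive the estimate, a basis-dependent or cutoff-dependent constant would creep in and blow up. Both difficulties are dissolved by the same pair of monotonicity facts used above — the pointwise bound $\|W_{m_{1}}(x)\|\le M$ and the trivial inequality $\sum_{\|m_{i}\|\le k_{i}}\le\sum_{m_{i}\in\mathbb{Z}}$ — which together eliminate all four parameters \emph{before} any quantitative choice of $\delta$ is made, guaranteeing that the resulting $\delta$ depends only on $\epsilon, \|a\|_{L^{1}},\|b\|_{L^{1}}$ and $M$.
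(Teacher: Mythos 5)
Your proposal is correct and follows essentially the same route as the paper's proof: both isolate the inner sum $R_{m_2}$ (your $G_{m_2}$), control the difference of squares term by term (the paper via $|B^2-C^2|\le (B-C)^2+2|C||B-C|$, you via the equivalent factorization $|\alpha^2-\beta^2|\le(\alpha+\beta)|\alpha-\beta|$), and then use the uniform bound $\|W_{m_1}(x)\|\le M$ together with the $L^1$ norms of the coefficient sequences to obtain a bound of the form $C\,\delta(\|a\|_{L^1}+\|b\|_{L^1}+\delta)$, uniform in $x,y,k_1,k_2$. No substantive difference.
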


\begin{proof} Let $k_1,k_2$ be positive integers, and define the function $R_{m_2}$ as
$$ R_{m_2}(a_1,a_2,x,y) := \sum_{\|m_1\|\le k_1}  a_1(m_1,m_2) \, W_{m_1} (x)-a_2(m_1,m_2) \, W_{m_1}(y).$$
Observe that
\begin{equation}
    \label{estimacontilem}\|R_{m_2}(a,b,x,y)-R_{m_2}(c,d,x,y)\|\le  M  \sum_{\|m_1\|\le k_1}  (\|(a-c)(m_1,m_2)\|+\|(b-d) (m_1,m_2)\|  ) 
\mbox{ ,} \end{equation} 
where $M$ is a constant independent of $x$ and $y$. For any real numbers $B$ and $C$, the following inequality holds
\begin{equation}
\label{desiguimport}|B^2-C^2|  = |(B-C)^2+2BC-2C^2|
 \le  (B-C)^2+2|C||B-C|
\mbox{ .} \end{equation}
Applying the above inequality to $B=\|R_{m_2}(c,d,x,y)\|,$ and $C=\|R_{m_2}(a,b,x,y)\|$, together with the fact that the $L^2$ norm is  smaller than or equal to the $L^1$ norm, we obtain that $|f_{k_1,k_2}(a,b,x,y)-f_{k_1,k_2}(c,d,x,y) |$ is bounded from above by
$$ M(E^2+2(\|a\|_{L^1}+\|b\|_{L^1})E) \mbox{, }$$
where $M$ is a constant which does not depend on $k_1, k_2, x, y,$ and 
$$E:=\|a-c\|_{L^{1}}+\|b-d\|_{L^{1}} \mbox{ .}$$
\end{proof}

\begin{thm}
\label{teoreprecualtiempo}
Suppose that for any $1\le j \le t+1,$ the kernel $k^j$ satisfies the same hypotheses of Theorem~\ref{hipo}. Then, for any positive number $\epsilon$, there exist  positive integers $\overline{k_1}$, $\overline{k_2}$, and $\overline{k_3},$ such that for any natural numbers $k_1, k_2,k_3$, satisfying  $\overline{k_1}\le k_1,$ $\overline{k_2}\le k_2$ and $\overline{k_3}\le k_3$, the following inequality holds
$$|f_{k_1,k_2}(h^{t+1}_{k_3}(a),h^{t+1}_{k_3}(a),x,y)-(D^{t+1} (x,y))^2| \le \epsilon \mbox{ ,}$$
where $f_{k_1,k_2}$ is defined as in Lemma~\ref{aproximalem}, and $h^{t+1}_{A}$ as in Lemma~\ref{poqueal}.
\end{thm}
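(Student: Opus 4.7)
The plan is to insert the intermediate quantity $f_{k_1,k_2}(a^{t+1},a^{t+1},x,y)$ and to split the target expression via the triangle inequality into
$$|f_{k_1,k_2}(h^{t+1}_{k_3}(a),h^{t+1}_{k_3}(a),x,y)-f_{k_1,k_2}(a^{t+1},a^{t+1},x,y)|$$
plus
$$|f_{k_1,k_2}(a^{t+1},a^{t+1},x,y)-(D^{t+1}(x,y))^2|,$$
and then to drive each summand below $\epsilon/2$ independently.

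For the second summand I would observe that the standing hypothesis says $k^{t+1}$ itself satisfies Hypothesis~\ref{hipo}, and that its coefficients in the tensor basis (\ref{tensorbasis}) are exactly the $a^{t+1}$ given by (\ref{expansion}). Lemma~\ref{poqueal} supplies finite-sum approximations $h^{t+1}_k(a)$ that converge to $a^{t+1}$ in $L^{1}(\mathbb{Z}\times\mathbb{Z})$; since each $h^{t+1}_k(a)$ lies in $L^{1}$, this forces $a^{t+1}\in L^{1}(\mathbb{Z}\times\mathbb{Z})$, so the summability condition (\ref{condilun}) is met for the coefficient sequence $a^{t+1}$. Applying Theorem~\ref{teoprin} to the kernel $k^{t+1}$ then yields positive integers $\overline{k_1},\overline{k_2}$ such that the second summand is at most $\epsilon/2$ whenever $k_1\ge\overline{k_1}$ and $k_2\ge\overline{k_2}$, uniformly in $x,y\in X$.

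For the first summand the natural tool is Lemma~\ref{continuitylemma} with the choices $a=b:=a^{t+1}$ and $c=d:=h^{t+1}_{k_3}(a)$: it delivers a tolerance $\delta>0$, depending only on $\epsilon$ and $\|a^{t+1}\|_{L^{1}}$, with the property that any $L^{1}$-approximation of $a^{t+1}$ within distance $\delta$ perturbs $f_{k_1,k_2}$ by at most $\epsilon/2$, \emph{uniformly in $k_1,k_2,x,y$}. Invoking Lemma~\ref{poqueal} with this $\delta$ then produces $\overline{k_3}$ such that $k_3\ge\overline{k_3}$ guarantees $\|a^{t+1}-h^{t+1}_{k_3}(a)\|_{L^{1}}\le\delta$, so the first summand is bounded by $\epsilon/2$ as required. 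Combining the two bounds with the triple of cutoffs $\overline{k_1},\overline{k_2},\overline{k_3}$ concludes the argument.

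The delicate point, and the reason Lemma~\ref{continuitylemma} is cast the way it is, is the uniformity of the continuity estimate in $k_1,k_2$ and in $x,y$: without that uniformity the choice of $\overline{k_3}$ would have to depend on the cut-offs $k_1,k_2$, violating the quantifier order of the statement. Once this uniformity is in hand, the rest is a direct assembly of Theorem~\ref{teoprin} (applied at time $t+1$) and Lemma~\ref{poqueal} (which guarantees the truncated coefficients approximate $a^{t+1}$ in $L^{1}$), so no further technical work is needed.
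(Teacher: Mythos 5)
Your proposal is correct and follows essentially the same route as the paper's own proof: insert $f_{k_1,k_2}(a^{t+1},a^{t+1},x,y)$, control the first difference via Theorem~\ref{teoprin} applied to $k^{t+1}$, control the second via Lemma~\ref{poqueal} combined with the uniform continuity of Lemma~\ref{continuitylemma}, and finish with the triangle inequality. Your added remarks --- the verification that $a^{t+1}\in L^{1}(\mathbb{Z}\times\mathbb{Z})$ so that condition (\ref{condilun}) holds for $k^{t+1}$, and the emphasis on the uniformity in $k_1,k_2,x,y$ needed to fix $\overline{k_3}$ independently of the other cut-offs --- are details the paper leaves implicit, and they strengthen rather than alter the argument.
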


\begin{proof}
Applying Theorem~\ref{teoprin} to the kernel $k^{t+1}$, we have that there exist $\overline{k_1}$, $\overline{k_2}$, such that for any $\overline{k_1} \le k_1,$ and $\overline{k_2} \le k_2,$  the following inequality holds 
$$ |f_{k_1,k_2}(a^{t+1},a^{t+1},x,y )-(D^t (x,y))^2| \le \epsilon/2\mbox{ , } $$
for any $x,y \in X$. Moreover, using Lemmas  \ref{poqueal} and  \ref{continuitylemma}, there exists a positive integer $\overline{k_3}$ with the property that if $\overline{k_3}\le k_3$, then for all $x,y \in X,$
$$| f_{k_1,k_2}(a^{t+1},a^{t+1},x,y)- f_{k_1,k_2}(h^{t+1}_{k_3}(a),h^{t+1}_{k_3}(a),x,y) | \le \epsilon/2 \mbox{ .}$$
Thus, by the triangular inequality we obtain the desired result.
\end{proof}

\subsection{Diffusion representation for changing data}
In this section we use the coordinate system of Eq.~(\ref{tensorbasis}) to represent  the dynamic diffusion distance induced by asymmetric kernels. Suppose that $\{k_\gamma\}$ is a family of asymmetric kernels defined in the data set $X$. Again, we consider the diffusion distance without the Markov normalization, that is, we work with the dynamic diffusion distance given by
$$ D^t(x_\gamma,y_\beta)={\| \,  k_\gamma ^ t(x,\cdot)-k_\beta ^ t(y,\cdot)\|}_{L^2(X)}.$$
\par We assume that for each parameter $\gamma$, the kernel $k_\gamma$ satisfies all the hypotheses of Theorem~\ref{difurepteo}. In this case, we can write the kernel $k_\gamma$ as
$$  k_\gamma(x,y)= \sum_{(m_1,m_2)  \in  \mathbb{Z} \times \mathbb{Z}} a^{\gamma}_{(m_1 , m_2 )} \, W_{m_1} \otimes W_{m_2} (x,y).$$
\par The following theorem gives a representation for the dynamic diffusion distance. The proof of the theorem is similar to that of in Theorem~\ref{difurepteo}.
\begin{thm} \label{diftimech} Assume that for all $\gamma$, the kernel $k_\gamma$ satisfies all the hypotheses of Theorem~\ref{difurepteo}, then the dynamic diffusion distance $(D^t (x_\gamma,y_\beta))^2$ can be written as
$$\sum_{m_2 \in  \mathbb{Z}} \| \sum_{m_1 \in  \mathbb{Z}}  (a^{\gamma})^t(n,m) \,  W_{m_1}(x)-(a^{\beta})^t(n,m) W_{m_1}(y) \| ^2.$$
\end{thm}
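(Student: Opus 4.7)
The plan is to mirror the proof of Theorem~\ref{difurepteo} almost verbatim, with the single kernel $k$ replaced by the two iterated kernels $k_\gamma^t$ and $k_\beta^t$. The hypothesis that each $k_\gamma$ satisfies the conditions of Theorem~\ref{difurepteo} gives $L^1$-summability of the coefficients $a^\gamma$ at time $t=1$, and by repeatedly invoking Lemma~\ref{poqueal} (or equivalently the recursion in Eq.~(\ref{recuco}) together with Assumption~(\ref{condilun})) this propagates to $L^1$-summability of $(a^\gamma)^t$ and $(a^\beta)^t$ for every positive integer $t$.

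Once this summability is in hand, I would write the two expansions
$$k_\gamma^t(x,z) = \sum_{(m_1,m_2)\in\mathbb{Z}\times\mathbb{Z}} (a^\gamma)^t(m_1,m_2)\, W_{m_1}(x)\, W_{m_2}(z),$$
and analogously for $k_\beta^t(y,z)$, and then use the $L^1$-summability to justify the Fubini-type rearrangement of the double series. Grouping by the second index $m_2$ gives, as a function of $z$ in $L^2(X)$,
$$k_\gamma^t(x,\cdot) - k_\beta^t(y,\cdot) = \sum_{m_2\in\mathbb{Z}} \left( \sum_{m_1\in\mathbb{Z}} (a^\gamma)^t(m_1,m_2)\, W_{m_1}(x) - (a^\beta)^t(m_1,m_2)\, W_{m_1}(y) \right) W_{m_2}(\cdot).$$
Since $\{W_{m_2}\}_{m_2\in\mathbb{Z}}$ is an orthonormal basis of $L^2(X)$, applying Parseval's identity to this expansion yields exactly the claimed formula for $(D^t(x_\gamma,y_\beta))^2$.

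The main obstacle is really the bookkeeping underlying the $L^1$-summability and the legitimacy of the rearrangement; this is the only place where the two-parameter setting behaves differently from the single-kernel case of Theorem~\ref{difurepteo}, because one must control the coefficients of two iterated kernels simultaneously rather than just one. Beyond this verification, the algebraic manipulations and the final application of Parseval are a direct analogue of the argument already given for Theorem~\ref{difurepteo}, so no new structural idea is needed.
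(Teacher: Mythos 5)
Your proposal is correct and follows essentially the same route as the paper, which simply states that the proof is analogous to that of Theorem~\ref{difurepteo}: expand $k_\gamma^t(x,\cdot)-k_\beta^t(y,\cdot)$ in the tensor basis, group by the second index, and apply Parseval's identity with the orthonormal basis $\{W_{m_2}\}$. Your additional observation that the $L^1$-summability of $(a^\gamma)^t$ must be propagated from $t=1$ via the recursion in Eq.~(\ref{recuco}) is a worthwhile detail that the paper leaves implicit.
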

\begin{rem} \label{rema}
Using the same ideas of the proof of Theorem~\ref{teoreprecualtiempo}, we can prove that it is possible to approximate the dynamic diffusion distance by  sums involving  few terms of $ a^{\gamma}$ and $a^{\beta}$ . To be more specific, the same statement of Theorem~\ref{teoreprecualtiempo} holds if we replace the classical diffusion distance by the dynamic diffusion distance $D^t (x_\gamma,y_\beta)$, and the function $f_{k_1,k_2}(h^{t+1}_C(a),h^{t+1}_C(a),x,y)$ by the function  $f_{k_1,k_2}(h^{t+1}_C (a^{\gamma}),h^{t+1}_C(a^{\beta}),x,y).$
\end{rem}

We use the previous result to compute the global diffusion distance in terms of the coefficients $a^{\gamma}$. We recall that the global diffusion distance between $X_\alpha,$ and $X_\beta,$ is defined by
$$ (\mathbb{D}^t(X_\gamma,X_\beta))^2=\int_{X} ({D}^t(x_\gamma,x_\beta))^2 d \mu (x).$$
\begin{thm}
Under the same assumptions of Theorem~\ref{diftimech}, the global diffusion distance at time $t$ can be written as
$$(\mathbb{D}^t(X_\gamma,X_\beta))^2=  \sum_{(n,m)  \in  \mathbb{Z} \times \mathbb{Z}}  \|((a^{\gamma})^t-(a^{\beta})^t)(n,m)\|^2 \mbox{ .}$$
\end{thm}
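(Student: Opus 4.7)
The plan is to substitute the representation from Theorem~\ref{diftimech} into the definition of $(\mathbb{D}^t)^2$, then swap the integral with the outer sum, and finally apply Parseval's identity to the orthonormal basis $\{W_{m_1}\}$ of $L^2(X)$. Specifically, evaluating Theorem~\ref{diftimech} at $x_\gamma = y_\beta = x$ collapses the pair $W_{m_1}(x) - W_{m_1}(y)$ (occurring implicitly in the $\gamma,\beta$ labels) into a single factor, giving
$$
(D^t(x_\gamma,x_\beta))^2 = \sum_{m_2 \in \mathbb{Z}} \Big\| \sum_{m_1 \in \mathbb{Z}} \big((a^{\gamma})^t - (a^{\beta})^t\big)(m_1,m_2)\, W_{m_1}(x) \Big\|^2,
$$
so that integrating against $d\mu(x)$ produces the quantity to be analyzed.

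The next step is to move the $m_2$-summation outside the integral. Since every term is non-negative, Tonelli's theorem applies unconditionally. For each fixed $m_2$, what remains is
$$
\int_X \Big| \sum_{m_1 \in \mathbb{Z}} c_{m_1}(m_2)\, W_{m_1}(x) \Big|^2 d\mu(x),
\qquad c_{m_1}(m_2) := \big((a^{\gamma})^t - (a^{\beta})^t\big)(m_1,m_2),
$$
and since $\{W_{m_1}\}$ is an orthonormal basis of $L^2(X)$, Parseval's identity identifies this integral with $\sum_{m_1} |c_{m_1}(m_2)|^2$. Summing over $m_2$ then yields the claimed expression.

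The only points that require some care are (i) the validity of the inner series as an $L^2(X)$-convergent expansion for each fixed $m_2$, and (ii) the finiteness of the full double sum of $|c_{m_1}(m_2)|^2$. Both follow from the hypotheses inherited from Theorem~\ref{difurepteo}: the summability condition~(\ref{condilun}) applied to $k_\gamma^t$ and $k_\beta^t$ guarantees that $(a^\gamma)^t$ and $(a^\beta)^t$ lie in $\ell^1(\mathbb{Z}\times\mathbb{Z})\subset \ell^2(\mathbb{Z}\times\mathbb{Z})$, so the inner coefficients $c_{\cdot}(m_2)$ lie in $\ell^2(\mathbb{Z})$ for each $m_2$ and the full double sum converges. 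I do not expect a substantive obstacle here, since the proof is essentially a clean application of Tonelli and Parseval; the main thing to be careful about is cleanly stating the specialization $x_\gamma = x_\beta = x$ so that the $W_{m_1}$-dependence factors as above and Parseval can be invoked in a single stroke per $m_2$.
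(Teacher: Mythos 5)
Your proposal is correct and follows essentially the same route as the paper: substitute the representation of Theorem~\ref{diftimech} with $x=y$, interchange the integral with the sums, and invoke orthonormality of $\{W_{m_1}\}$ to reduce to the coefficients. The only difference is in packaging: the paper expands the quadratic form into cross terms and justifies the interchange by dominated convergence via an explicit $L^1$ bound on the coefficients, whereas you use Tonelli for the outer sum and Parseval for each fixed $m_2$ --- a slightly cleaner organization of the same argument, resting on the same $\ell^1\subset\ell^2$ summability of $(a^{\gamma})^t-(a^{\beta})^t$.
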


\begin{proof}
Theorem~\ref{diftimech} implies that
$$ (\mathbb{D}^t(X_\gamma,X_\beta))^2= \int_{X}  \sum_{m \in  \mathbb{Z}} \| \sum_{n \in  \mathbb{Z}}  ((a^{\gamma})^t-(a^{\beta})^t)(n,m) W_{n}(x) \| ^2 d \mu (x) \mbox{ .}$$
Expanding the quadratic form, we obtain that 
$$\mathbb{D}^t(X_\gamma,X_\beta)^2 = \int_{X} \sum_{m \in  \mathbb{Z}}   \sum_{(i,j) \in  \mathbb{Z} \times \mathbb{Z} }     ((a^{\gamma})^t-(a^{\beta})^t)(i,m) \, \overline{((a^{\gamma})^t-(a^{\beta})^t)(j,m)} \, W_{i}(x) \, \overline{W_{j}}(x)d \mu (x) \mbox{.}$$
Using Hölder's inequality and the fact that $\|W_n\|_{L^2(X)}=1$, we conclude that the expression 
$$  S=\sum_{m \in  \mathbb{Z}}   \sum_{(i,j) \in  \mathbb{Z} \times \mathbb{Z} }    \int_X \|((a^{\gamma})^t-(a^{\beta})^t)(i,m) \, \overline{((a^{\gamma})^t-(a^{\beta})^t)(j,m)} \, W_{i}(x) \, \overline{W_{j}}(x) \| d \mu (x). $$
is bounded from above by
$$  \sum_{m \in  \mathbb{Z}} \left(\sum_{n \in  \mathbb{Z}} \|((a^{\gamma})^t-(a^{\beta})^t)(n,m)\|  \right)^2 \le \left( \sum_{m \in  \mathbb{Z}}\sum_{n \in  \mathbb{Z}} \|((a^{\gamma})^t-(a^{\beta})^t)(n,m)\| )  \right)^2 < \infty \mbox{, }$$
where we used the fact that the $L^2$ norm is smaller than or equal to the $L^1$ norm. By the dominated convergence theorem, we can change the order between the integral and the sums. Moreover, using the fact that $W_n$ is an orthonormal basis of $L^2(X),$ we conclude that
$$(\mathbb{D}^t(X_\gamma,X_\beta))^2= \sum_{m \: \in  \: \mathbb{Z} }  \sum_{n\: \in  \: \mathbb{Z}} \|((a^{\gamma})^t-(a^{\beta})^t)(n,m)\|^2\mbox{ . } $$
\end{proof}

\subsection{Weak representation}
The framework developed up to this point uses the absolute summability condition (\ref{condilun}).  We now relax this assumption, and under an {\it a priori} assumption on the diffusion distance, we design a representation for the diffusion distance of points lying in a set of large measure. Without loss of generality, we work with the changing data framework.

\begin{thm} Let $t$ be a nonnegative integer, assume that the kernel $k^{t+1}_\gamma $ satisfies the Hypothesis  \ref{hipo}, and also that for any integer $j$, with  $1\le j \le t,$ the kernel $k^{j}_\gamma $ satisfies all the hypotheses of Theorem~\ref{difurepteo}. If we suppose that the dynamic diffusion distance at time $t+1$ is bounded from above, that is, there exists a positive constant $C$ such that for any $x,y \in X$, and all indices $\gamma, \beta ,$
$$D^{t+1} (x_\gamma,y_\beta) \le C.$$
Then, for any positive real number $\epsilon$, there exists a positive integer $k_0,$ such that for any integer $k\ge k_0$, there exists a measurable set $E_k$, with the property that the Lebesgue measure of $X \setminus E_{k}$ satisfies $\mathcal{L}(X \setminus E_{k})<\epsilon$, and such that for any $x,y \in E_{k}$ the following inequality holds
$$\|f_{k,k}(h^{t+1}_k (a^\gamma_ k),h^{t+1}_k(a^\beta_ k),x,y)-(D^{t+1} (x_\gamma,y_\beta))^2\| \le \overline{C} (\sqrt{\epsilon}+\epsilon)  \mbox{, }$$
where $\overline{C}$ is a constant and the functions $f_{k,k},$  $h^{t+1}_k$ are defined in Lemmas  \ref{aproximalem}, and  \ref{poqueal}, and $a^\gamma_ k$ is the truncated sequence defined by 
$$
a^\gamma_ k(n,m) = \left\{
     \begin{array}{@{}l@{\thinspace}l}
       \ a^\gamma(n,m)\mbox{, } &   \quad\text{if} \quad \|(n,m)\|\le k \\
       \ 0\mbox{, } &  \quad\text{otherwise} \mbox{ } \\

     \end{array}
   \right.
\mbox{. } $$  

\end{thm}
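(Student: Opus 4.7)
The plan is to replace the $\ell^1$-summability of the iterate coefficients $a^{\gamma,t+1}$ used in Theorem~\ref{teoreprecualtiempo} by the weaker $L^2$-integrability of $k^{t+1}_\gamma$ supplied by Hypothesis~\ref{hipo}, and to pay for this by obtaining the conclusion only on a set $E_k$ of measure at least $\mathcal{L}(X)-\epsilon$. The bridge from $L^2$-smallness on the product space to pointwise-in-$x$-and-$y$ control will be furnished by Chebyshev's inequality, which trades mean-square smallness in $x$ for a small bad set.

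First I would introduce the truncated kernel
\[
K^\gamma_k(x,y) := \sum_{\|m_1\|\le k,\ \|m_2\|\le k} h^{t+1}_k(a^\gamma_k)(m_1,m_2)\, W_{m_1}(x)\, W_{m_2}(y),
\]
so that by Parseval's identity in the basis $\{W_{m_2}\}$,
\[
f_{k,k}\bigl(h^{t+1}_k(a^\gamma_k),h^{t+1}_k(a^\beta_k),x,y\bigr) = \|K^\gamma_k(x,\cdot)-K^\beta_k(y,\cdot)\|^2_{L^2(X)}.
\]
The crucial step will be to prove $\|K^\gamma_k-k^{t+1}_\gamma\|_{L^2(X\times X)}\to 0$ as $k\to\infty$. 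I would expand both kernels in the tensor basis and split the $\ell^2$ error into a body and a tail: the body is the norm of $h^{t+1}_k(a^\gamma_k)-a^{\gamma,t+1}$ restricted to $\|m_i\|\le k$, which vanishes by an induction on $t$ combining the $\ell^1$ estimate of Lemma~\ref{poqueal} with the continuity argument of Lemma~\ref{continuitylemma} applied to the truncated input $a^\gamma_k$; the tail is $\sum_{\|m_1\|>k\text{ or }\|m_2\|>k}|a^{\gamma,t+1}(m_1,m_2)|^2$, which tends to zero because Hypothesis~\ref{hipo} on $k^{t+1}_\gamma$ forces $a^{\gamma,t+1}\in\ell^2(\mathbb{Z}^2)$ via Parseval.

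Next I would set $\phi^\gamma_k(x):=\|K^\gamma_k(x,\cdot)-k^{t+1}_\gamma(x,\cdot)\|_{L^2(X)}$ and use Fubini to write $\int_X(\phi^\gamma_k)^2\,d\mu=\|K^\gamma_k-k^{t+1}_\gamma\|^2_{L^2(X\times X)}$. Given $\epsilon>0$, choose $k_0$ so large that this integral is at most $\epsilon^2/2$ for both indices $\gamma$ and $\beta$ whenever $k\ge k_0$. Chebyshev's inequality then yields $\mathcal{L}(\{x:\phi^\gamma_k(x)>\sqrt{\epsilon}\})\le\epsilon/2$ and likewise for $\beta$, so the set
\[
E_k := \{x:\phi^\gamma_k(x)\le\sqrt{\epsilon}\}\cap\{x:\phi^\beta_k(x)\le\sqrt{\epsilon}\}
\]
satisfies $\mathcal{L}(X\setminus E_k)<\epsilon$.

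The conclusion is then closed by the quadratic expansion used in Lemma~\ref{continuitylemma}. For $x,y\in E_k$, write $B=\|K^\gamma_k(x,\cdot)-K^\beta_k(y,\cdot)\|_{L^2(X)}$ and $A=\|k^{t+1}_\gamma(x,\cdot)-k^{t+1}_\beta(y,\cdot)\|_{L^2(X)}=D^{t+1}(x_\gamma,y_\beta)$. The triangle inequality gives $|B-A|\le\phi^\gamma_k(x)+\phi^\beta_k(y)\le 2\sqrt{\epsilon}$, and the a priori bound gives $A\le C$. Substituting into inequality~\eqref{desiguimport} yields
\[
|B^2-A^2|\le (B-A)^2 + 2A|B-A| \le 4\epsilon + 4C\sqrt{\epsilon} \le \overline{C}\bigl(\sqrt{\epsilon}+\epsilon\bigr),
\]
which is the required estimate. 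The main obstacle is the body estimate in the $L^2$-convergence of $K^\gamma_k$: one cannot invoke Lemma~\ref{poqueal} directly because both the sum ranges and the input sequence are truncated and because the top-level iterate $a^{\gamma,t+1}$ is only assumed to lie in $\ell^2$, so a careful induction on $t$ using the $\ell^1$ summability of the intermediate iterates $a^{\gamma,j}$, $1\le j\le t$, is necessary to control the combined truncation error.
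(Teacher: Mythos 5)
Your proposal is correct and follows essentially the same route as the paper: truncate the coefficients, show the $(t+1)$-st iterate of the truncated kernel converges to $k^{t+1}_\gamma$ in $L^2(X\times X)$, pass to a large set $E_k$ via Fubini and Chebyshev, and close with the inequality $|B^2-C^2|\le (B-C)^2+2|C||B-C|$ together with the a priori bound on $D^{t+1}$; your identification of $f_{k,k}(h^{t+1}_k(a^\gamma_k),h^{t+1}_k(a^\beta_k),x,y)$ with $\|K^\gamma_k(x,\cdot)-K^\beta_k(y,\cdot)\|^2_{L^2(X)}$ is exactly the paper's observation that this quantity equals $D^{t+1}(x_{\gamma,k},y_{\beta,k})^2$. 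The only point of divergence is the intermediate convergence $\|K^\gamma_k-k^{t+1}_\gamma\|_{L^2(X\times X)}\to 0$, which you obtain by a coefficient-level body/tail estimate (Lemma~\ref{poqueal} plus Parseval for the $\ell^2$ tail) whereas the paper gets it more directly by iterating the recursion~(\ref{recufor}) and using continuity of operator composition in the $L^2(X\times X)$ norm --- both arguments are valid.
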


\begin{proof} Define the truncated kernel $k_{\gamma, k}$ by
 $$ k_{\gamma, k}(x,y)=\sum_{(m_1,m_2)  \in  \mathbb{Z} \times \mathbb{Z}} a^\gamma_ k{(m_1 , m_2 )} \, W_{m_1} \otimes W_{m_2} (x,y) \mbox{ . } $$ 
Using the recursive formula in Eq.~(\ref{recufor}), and the fact that $\lim_{k\to\infty}  k_{\gamma, k}= k_{\gamma}$, we see inductively that $\lim_{k\to\infty}  k^{t+1}_{\gamma, k}= k^{t+1}_{\gamma}$ for any non negative integer $t$, where the convergence is in the $L^2(X \times X)$ norm. Therefore, for any positive number $\epsilon$, there exists $k_0$ such that for any  integer $k$ satisfying $k_0 \le k,$ we have  $\|k^{t+1}_{\gamma}-k^{t+1}_{\gamma, k}\|_{L^2(X \times X)} \le \epsilon.$ Define the set 
$$E_k = \{ x\in X | \, \, \,\|k^{t+1}_{\gamma, k}(x, \cdot)-k^{t+1}_{\gamma}(x, \cdot),\|_{L^2(X)} \le  \sqrt{\epsilon} \} \mbox{ .}$$
Then, by Chebyshev's inequality and Fubini's theorem we obtain that $$\mathcal{L}(X \setminus E_{k})\le \frac{1}{\epsilon} \int_X \|k^{t+1}_{\gamma, k}(x, \cdot)-k^{t+1}_{\gamma}(x, \cdot),\|^2_{L^2(X)} dx \le \epsilon \mbox{ .}$$
On the other hand, if $x, y \in E_k $, then by Minkowski inequality
$$|D^{t+1} (x_{\gamma, k},y_{ \beta, k})-D^{t+1} (x_\gamma,y_\beta)| \le 2 \sqrt{\epsilon} \mbox{ , }$$
where $x_{\gamma, k}$ is the data point $x$ endowed with the kernel structure $k^{t+1}_{\gamma, k}$. We apply Inequality~ (\ref{desiguimport}) of Lemma   \ref{continuitylemma} with $B=D^{t+1} (x_{\gamma, k},y_{ \beta, k}),$ and $C=D^{t+1} (x_\gamma,y_\beta)$, to obtain
$$|D^{t+1} (x_{\gamma, k},y_{\beta, k})^2-D^{t+1} (x_\gamma,y_\beta)^2|\le (1+2M)(\epsilon+\sqrt{\epsilon}) \mbox{ .}$$
Since the set of non-zero coefficients of the kernel $k_{\gamma, k}$ is finite, then, by Eq.~(\ref{expansion}), we conclude that $a^{t+1}_{\gamma, k}(n,m)=0$ whenever $\|(n,m)\|>k,$ and also that for any integers $n,m$ we have $a^{t+1}_{\gamma, k}(n,m)= h^{t+1}_{k}(a)(n,m)$. Using these facts we conclude that
$$ D^{t+1} (x_{\gamma, k},y_{\beta, k})^2=f_{k,k}(h^{t+1}_{k} (a^\gamma_ k),h^{t+1}_{k} (a^\beta_ k),x,y) \mbox{ .}$$
\end{proof}

\section{Applications, Experiments, and Results}
\label{experimentos}
\subsection{Dimensionality reduction}
\label{capacidadpc}
In this section, we use our representation framework to reduce the dimension of data sets, in such way that the Euclidean norm of the reduced data set  approximates the diffusion distance.
Assume that the data set $X$ is endowed with the kernel structure $k_{\alpha}.$ Then, by Remark~\ref{rema} the map $\phi_{k_1,k_2}: X \to \mathbb{C}^{2k_2 +1}$ defined by
\begin{equation}
    \label{redudimen}
     \phi_{k_1,k_2}(x)(i)=\sum_{\|m_1\|\le k_1 }  (a^{\gamma})^t(m_1,i-k_2-1) \,  W_{m_1}(x)  ,
\end{equation}
approximates the diffusion distance at time $t$. We summarize the above in Algorithm~\ref{algoritmo}.
\par In order to work with a small parameter $k_2$, and thus embed our data set in a low dimensional Euclidean space, we need to use  a proper orthonormal basis of  $L^2(X)$. Proper bases are those for which the information of the kernel $k$ is concentrated on the coefficients $a_{(m_1,m_2)},$ whose pair of integers $(m_1,m_2)$ are near to the origin.

{ \centering
\begin{minipage}{ 1 \linewidth}
  \begin{algorithm}[H] 
   \caption{Dimensionality reduction algorithm. \label{algoritmo}}
\begin{enumerate}
      \item Take the data in the form of an $M \times M$ (possibly asymmetric) matrix $L$ representing the kernel structure on the data set  
      \item Run the Markov process $t$-times, and use the matrix $L^t$ instead of $L$.
      \item Compute the first components of the matrix $L^t$ in a proper orthonormal tensor basis.
      \item Embed the data set using the function $\phi_{k_1,k_2}$ of Eq.~(\ref{redudimen})
    \end{enumerate}
  \end{algorithm}
\end{minipage}
\hfill \break
\par
}
\par 
We note that depending on the data and the mathematical model, we may want to use a proper normalization for the matrix $L$ in Algorithm~\ref{algoritmo}.

In practical situations, our measure space is a finite data set $X=(x_i)^{n}_{i=1}$, endowed with the counting measure. If the kernel matrix $\textbf{A}$ is symmetric, then one can use the eigenvector basis. More generally, if the matrix $\textbf{A}$ is asymmetric, one can use the singular value decomposition (SVD) to write 
$$\textbf{A}=\sum_{i=1}^{n} c_i  \, L_i \otimes R_i\mbox{ , } $$
where $ L_i$ and $R_i$ are the left singular vectors and right singular vectors of $\textbf{A}$, respectively, and where $c_i$ are the singular values of $\textbf{A}$.  The computational complexity of the SVD of an $n\times n$ matrix is $O(n^3)$. In the following experiments, we use the Fourier basis to improve the computational complexity. In fact, the computational complexity of the 2-D FFT is $O(n^2 \log n).$  We recall that the Fourier basis $(E_k)^{n-1}_{k=0}$ in the complex vector space $\mathbb{C}^{n}$ is given by
\begin{equation}
    \label{fourier}
    E_k=\frac{1}{\sqrt{n}}(1, e^{2 \pi i k / n}, e^{2 \pi i 2 k / n}, ... , e^{2 \pi i(n-1) k / n})
\mbox{ .} \end{equation}
\par In all experiments we compute the coefficients using the 2D-FFT algorithm.  All experiments were run in \MATLAB software, using a desktop computer with the following configuration: Intel core i7-2600 3.4 GHz processor, and 16 GB RAM.
\subsection{Synthetic data using a symmetric kernel}

In this experiment, our data set $X$ consists of $n$ random points in the sphere $S^2$ (Figure~\ref{esfera}). Here, we use the parametrization  
\begin{alignat*}{4} 
x(u,v) & {}={} &  \cos{u} & \sin{v} \mbox{, }  \\
y(u,v) & {}={} &  \sin{u} & \sin{v} \mbox{, }  \\
z(u,v) & {}={} &  \cos{v} \mbox{, } 
\end{alignat*}
for $0 \le u \le  \pi$ and $0 \le v \le 2 \pi$. We endow the data set $X$ with the Markov normalization $\rho$ defined in Eq.~(\ref{kerma}), of the Gaussian kernel
$$ k(x,y)=e^{- ||x-y||^2} \mbox{ .}$$
\begin{figure}
 \centering
\includegraphics[scale=0.7]{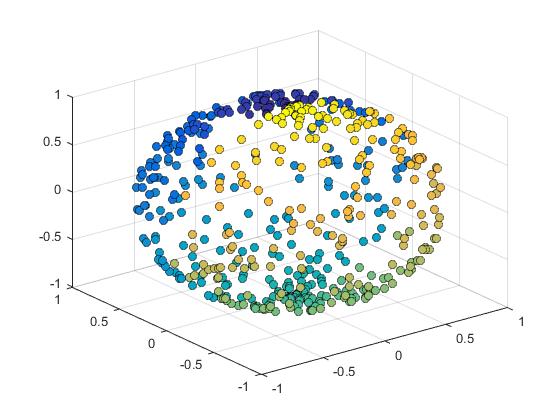}
\caption{Data set $X$ with $512$ random points in the sphere $S^2$.}
\label{esfera}
\end{figure}
Our goal is to compare the efficiency between the representation given in Eq.~(\ref{spede}) using the eigenvector basis and the representation of Theorem \ref{difurepteo}, using the Fourier basis of Eq.~(\ref{fourier}). In
Figure~\ref{experisintetico}, we show the first two coordinates for each representation with a data set of 512 points. Observe that the first two coordinates of both representations are similar.
\begin{figure}[htp] 
    \centering
    \subfloat[ Eigenvector basis]{%
        \includegraphics[width=0.33\textwidth]{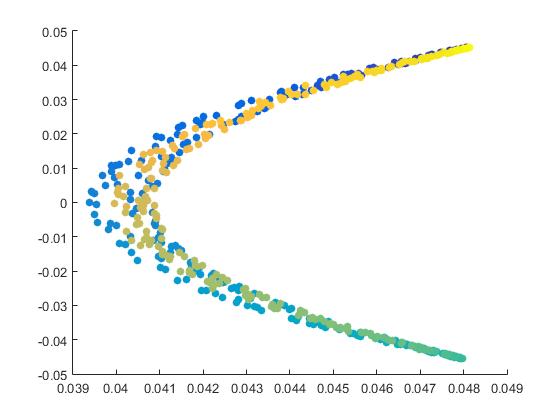}%
        }%
    \subfloat[Real and imaginary part of the Fourier basis]{%
        \includegraphics[width=0.33\textwidth]{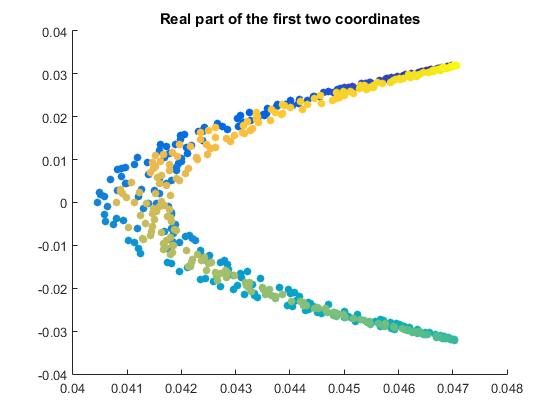}%

        \includegraphics[width=0.33\textwidth]{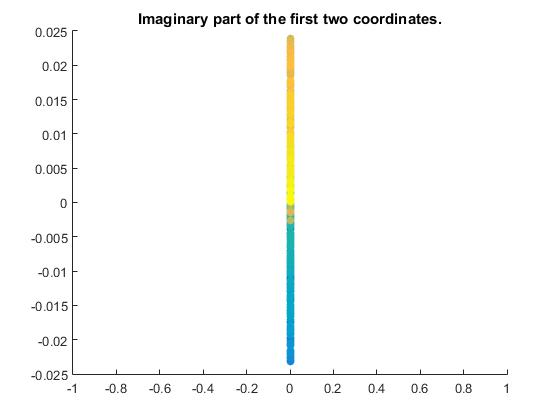}%
       }%

    \caption{Plot of the two dimensional embedding for the data set $X$  using the eigenvector basis coefficients (a), and the Fourier basis coefficients (b). Note the scale.}
    \label{experisintetico}
\end{figure}
In Figure~\ref{errorsintetico}, we plot the error, and computational time (in seconds) of the first two
coordinates for several values of $n$. We also remark that by using the Fourier basis, the performance of the representation is faster than using the eigenvector basis, and also that the Fourier basis gives an acceptable error when compared to the eigenvector method.
\begin{figure}[htp] 
    \centering
    \subfloat[Error of the approximation]{%
        \includegraphics[width=0.5\textwidth]{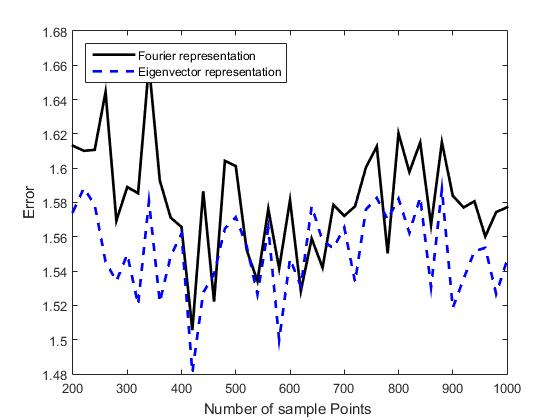}%
        }%
    \subfloat[Computational time]{%
        \includegraphics[width=0.5\textwidth]{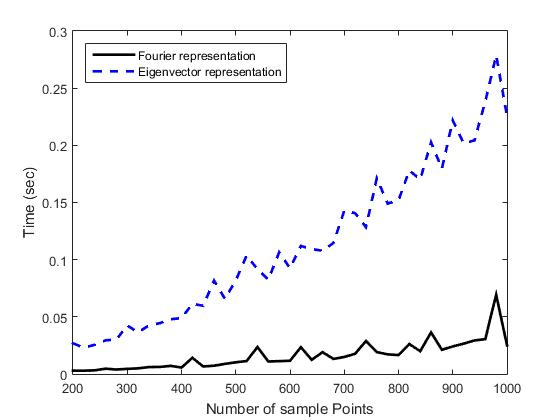}%
        }%
       
    \caption{Plot of the $L^2$ error and  computational time of the first two coordinates for different $n\times n$ kernel-sizes, for the data set of random points in the sphere.}
    \label{errorsintetico}
\end{figure}

\subsection{Synthetic data on the Möbius strip}

Here, we assume that our data set $X$ is a set of $300$ data points distributed along of the Möbius strip (Figure~\ref{mobius}).  We use the parametrization  
\begin{alignat*}{4} 
x(u,v) & {}={}   (1+\frac{v}{2} \cos{\frac{u}{2}} ) \cos{u} \mbox{, } \\
y(u,v) & {}={}   (1+\frac{v}{2} \cos{\frac{u}{2}} ) \sin{u} \mbox{, } \\
z(u,v) & {}={}   \frac{v}{2} \sin{\frac{u}{2}} \mbox{, }
\end{alignat*}
for $0 \le u \le 2 \pi$ and $-\frac{1}{2} \le v \le \frac{1}{2} $. We endow the data set $X$ with the Markov normalization $\rho$ defined in Eq.~(\ref{kerma}),
of the weight Gaussian kernel
\begin{figure}[h!] 
 \centering
\includegraphics[scale=0.7]{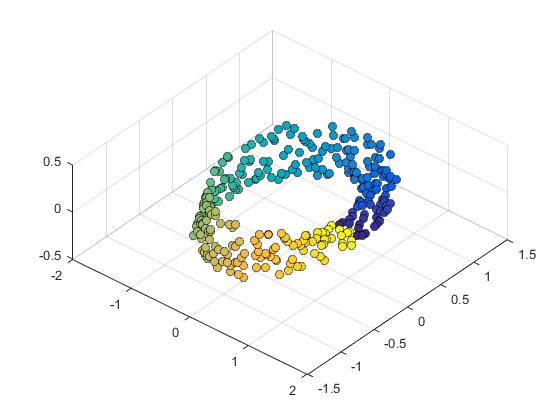}
\caption{Data set $X$ with $300$ random points in the M\"obius strip $M^2$.}
\label{mobius}
\end{figure}
$$ k(x,y)=(S(x-y)+1) e^{- ||x-y||^2}\mbox{, }$$
where $S(z)$ is the sign function  of the angle (in cylindrical coordinates) of the vector $z$. This kernel measures local information taking into account if the first two components are rotating clockwise.
In this experiment, we compared the performance of the representation using the SVD, and the representation using the Fourier basis. In Figure~\ref{mergu}, we plot the first two coordinates for each representation. Note that the real part of the representation given by the Fourier basis allows us to see in more detail the distribution of the data set. In fact,  the representation that uses the Fourier basis recognizes the rotation of the data set. However, the presentation that uses SVD does not allow recognizing this feature of $X$. This is due to the fact that the representation using the SVD approximates the kernel $ k ^ T k $, instead of the kernel $ k $ (Figure~\ref{aproximkersime}). Therefore, the representation using the SVD does not distinguish some geometric properties of the data set $ X $.
\begin{figure}[htp] 
    \centering
    \subfloat[ svd]{%
        \includegraphics[width=0.33\textwidth]{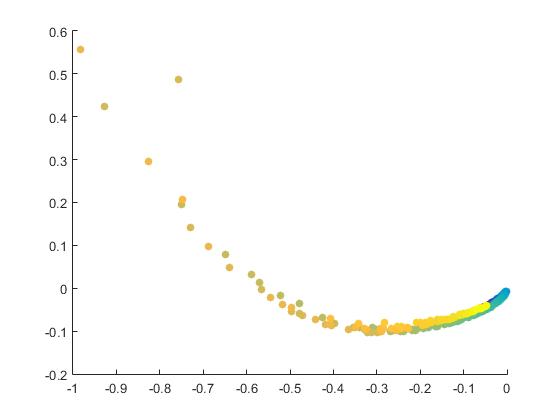}%
        }%
    \subfloat[Real and imaginary part of the Fourier basis]{%
        \includegraphics[width=0.33\textwidth]{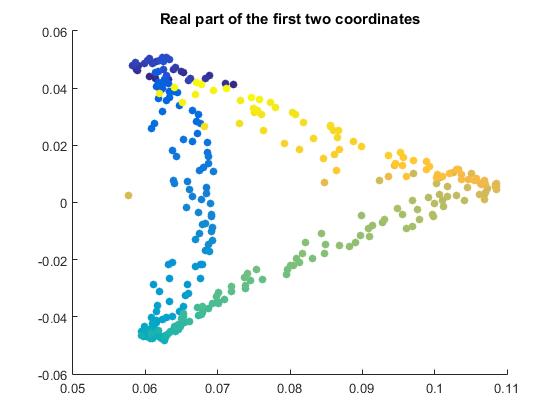}%

        \includegraphics[width=0.33\textwidth]{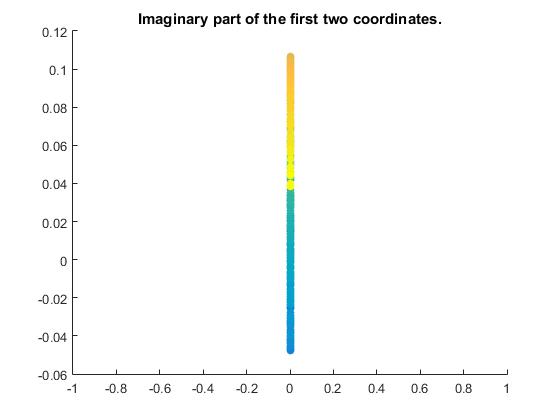}%
       }%
       \caption{Dimensionality reduction using the Eigenvector basis, and Fourier basis}
       \label{mergu}
       \end{figure}

       \begin{figure}
    \centering
    \subfloat[ Kernel matrix $k$]{%
        \includegraphics[width=0.5\textwidth]{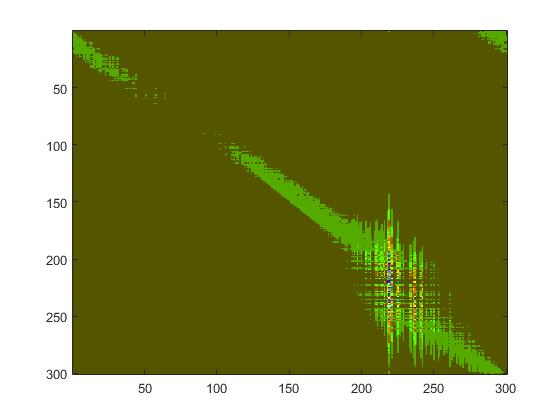}%
        }%
   
   \subfloat[ Kernel matrix $k^T k$]{%
        \includegraphics[width=0.5\textwidth]{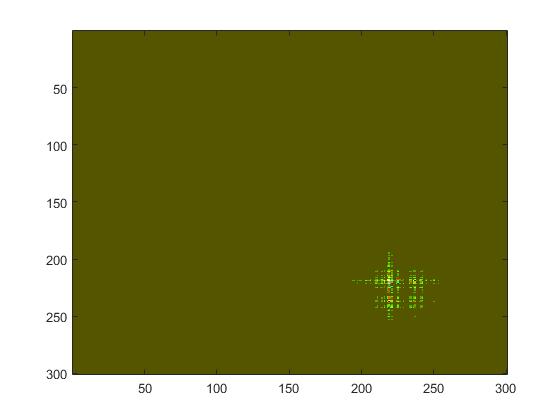}%
        }%

          \caption{Kernel matrix $k$ (a), and normalization $k^T k$ (b)}
           \label{aproximkersime}
       \end{figure}

\subsection{Synthetic data using an asymmetric
kernel}

Here, we assume that our data set $X$ is a random set of $256$ data points. We endow this data set with the kernel structure $k$  given by the Tom Jobim picture of Figure~\ref{tomjobim}  whose dimensions are $256 \times 256$ pixels. That is, $k(x,y)$ is defined to be the gray scale value of the pixel coordinates $(x,y)$. As in the previous experiment, we use the Markov normalization of the kernel $k$. In this experiment, we compared the performance of the representation using the SVD and the representation using the Fourier basis.
\par We stress that our objective with this example is not to try to do image processing, but rather to use a picture so that we are able to visually assess the quality of the approximation. This point will be furthered in the sequel.  
\par In Figure~\ref{experisinteticoapro}, we plot the approximation of the kernel using the SVD and the Fourier basis, both using the parameters $k_1=256$ and $k_2=64$. One may notice that we see a horizontal modulation both under the SVD and the Fourier basis methods. However, this modulation is stronger in the Fourier method. This is due to the fact that we have only used high frequencies to approximate this image. 
\par In this case we observe that despite using a smaller number of  parameters, it is possible to obtain a good approximation of the original image. In  Figure  \ref{errortom}, we plot the $L^2$ error, and computational time (in seconds), in a logarithmic scale, of the embedding data set using several approximation orders. As in the previous experiment, we see that using the Fourier basis, the performance is faster, and provides an acceptable error when compared to the SVD method. Furthermore, we point out that in the two previous experiments we did not obtain a better performance with respect to computational time when we used the truncated SVD instead of the SVD.

\begin{figure}
 \centering
\includegraphics[scale=0.7]{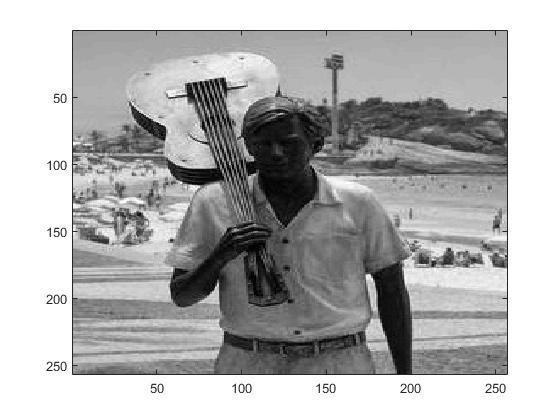}
\caption{Synthetic  asymmetric kernel structure chosen from a picture of Tom Jobim in Ipanema Beach (Rio de Janeiro, Brazil). Source \cite{Tomjo}.}
\label{tomjobim}
\end{figure}

\begin{figure}[htp] 
    \centering
    \subfloat[Approximation using the SVD]{%
        \includegraphics[width=0.7\textwidth]{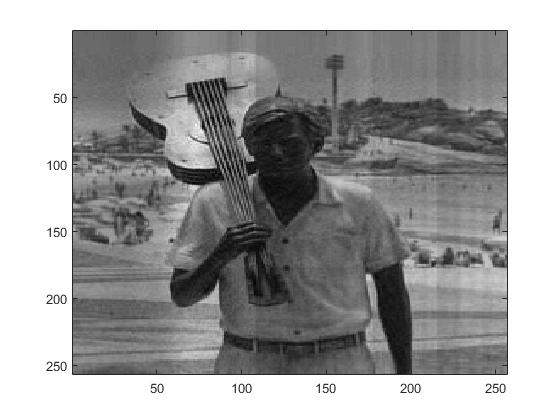}%
        }%
    \hfill%
    \subfloat[Approximation using the Fourier basis]{%
        \includegraphics[width=0.7\textwidth]{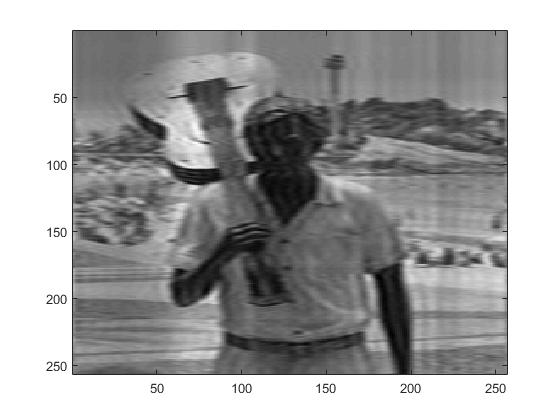}%
       
        }%

    \caption{Plot of the approximation using the SVD and the Fourier basis, both using the parameters $k_1=256$ and $k_2=64$}
    \label{experisinteticoapro}
\end{figure}

\begin{figure}[htp] 
    \centering
    \subfloat[Approximation error]{%
        \includegraphics[width=0.7\textwidth]{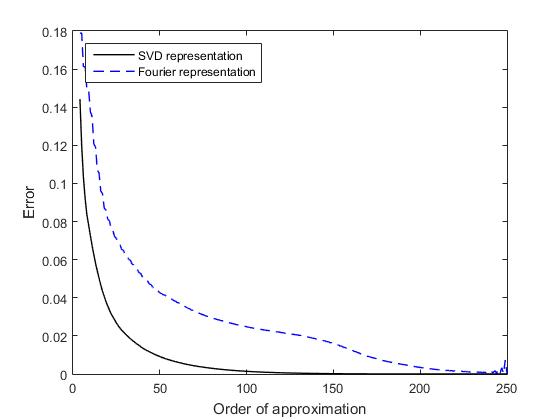}%
        }%
    \hfill%
    \subfloat[Computational time]{%
        \includegraphics[width=0.7\textwidth]{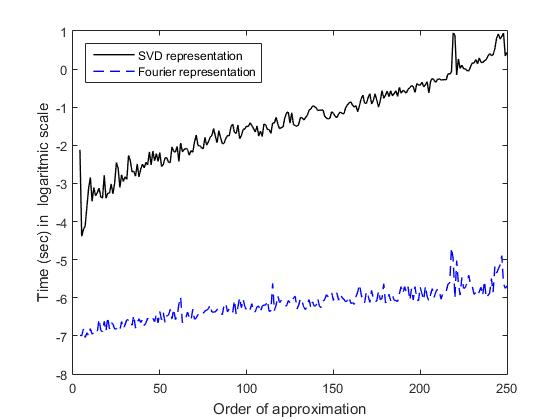}%
       
        }%

    \caption{Plot of the $L^2$ error and computational time for the data set representing the picture of Tom Jobim.}
    \label{errortom}
\end{figure}

\subsection{Temperature changes in Brazil}
In the last decades the world temperature distribution has presented drastic changes,  in part this is likely due to human activities \cite{climacamb, climacamb1,acuepar}. Here, we use the diffusion distance for changing data to detect the regions of Brazil in which the local temperature variation was the highest in the years 2010 and 2018 (Figure~\ref{anosmapas}),  compared with the year 2000 (Figure~\ref{anosmapas}). In fact, if a certain region has a great diffusion distance, it means that this particular region has presented significant changes in its temperature. This experiment is based on the change detection on hyperspectral imagery data proposed in Ref.~\cite{COIFMAN201479}. Our data set consists of $N=13,974$ points, and  each point represents a pixel coordinate of the Brazilian map. These points are a subset of a picture of size $170 \times 170$ pixels.
Here we do not take into account the blank pixels, which correspond to places outside the Brazilian soil. For each year we endow the data set with the un-normalized  kernel $K$ which is defined on $ X \, \times \, X$ by
$$K(x_{i,j},x_{k,l})=\frac{1}{\sqrt{N}} T_{\alpha}(x_{k,l}) \, e ^ {-\|(i,j)-(k,l)\|^2 / 2 \sigma^2}\mbox{ , } $$ 
where $T_{\alpha}(x_{k,l})$ is the temperature in the rectangular pixel $x_{k,l}$ in the year $\alpha  \in$  \{2000, 2010, 2018\}, and where $\|\cdot\|$ is the Euclidean distance. In this experiment we use the scaling parameter $2 \sigma^2=650$. We obtained similar results with a parameter in a range of $600 \le 2 \sigma^2 \le 700$.
We use this kernel without normalization in order to avoid its high computational cost. This data set was taken from the Brazilian National Institute of Meteorology website \cite{climain}. 
This asymmetric kernel represents the distribution of the local temperature around the rectangular pixel $x_{k,l}$.
\par We use Theorem~\ref{diftimech} to approximate the dynamic diffusion distance $(D^1 (x_{2000},x_{\gamma}))^2$, for $\gamma \in \{2010, 2018 \}$. Due to the high dimensionality of the kernel matrix, the SVD algorithm did not run in the computer whose configuration is given in Section  \ref{capacidadpc}. Therefore, we cannot use the singular vector basis to represent the diffusion distance. Here, we use the Fourier basis defined in Eq. $(\ref{fourier})$ to represent this diffusion distance. We approximate the dynamic diffusion distance using the parameters $k_1=13,974$, and  $k_2 \in \{5,100\}$. See Figures  \ref{experi2010} and   \ref{experi2018} for $2010$ and $2018$, respectively. 
The green-yellow scale represents the intensity of the dynamic diffusion distance, in which the yellow regions have a greater diffusion distance as compared to the green regions. To detect which regions have the greatest positive variation, that is, regions in which the temperature has increased, we use contour plots of the diffusion distance, taking into account regions where the temperature increased. See Figures  \ref{experi2010map} and  \ref{experi2018map} for 2010 and 2018, respectively. 
\par In Table  \ref{tabla}, we show the global diffusion distance for each year. Observe that the distance is greater in 2018 than in 2010. This suggests that during 2018, there were more changes in temperature when compared to  2010.  
In Figures~\ref{error2010} and  \ref{error2018}, we plot the error and computational time of the performance for several approximation orders $k_2$, for 2010 and 2018, respectively. We evaluate the performance of the orders using the metric

\begin{figure}[htp] 
    \centering
    \subfloat[Year $2000$]{%
        \includegraphics[width=0.4\textwidth]{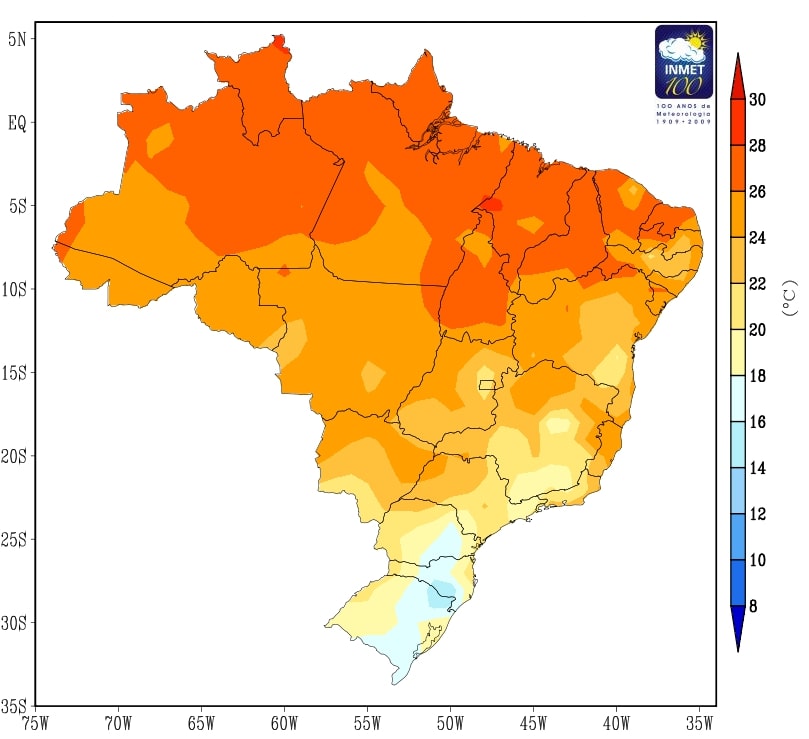}%
     }%
    \subfloat[Year $2010$]{%
        \includegraphics[width=0.4\textwidth]{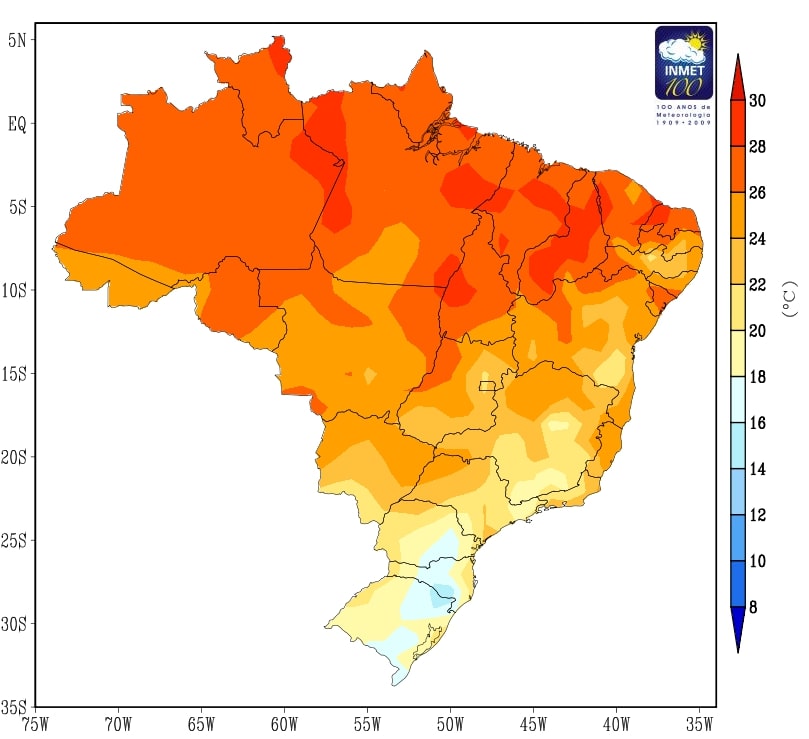}%
  
        }%
        \hfill
        \subfloat[Year $2018$]{%
        \includegraphics[width=0.4\textwidth]{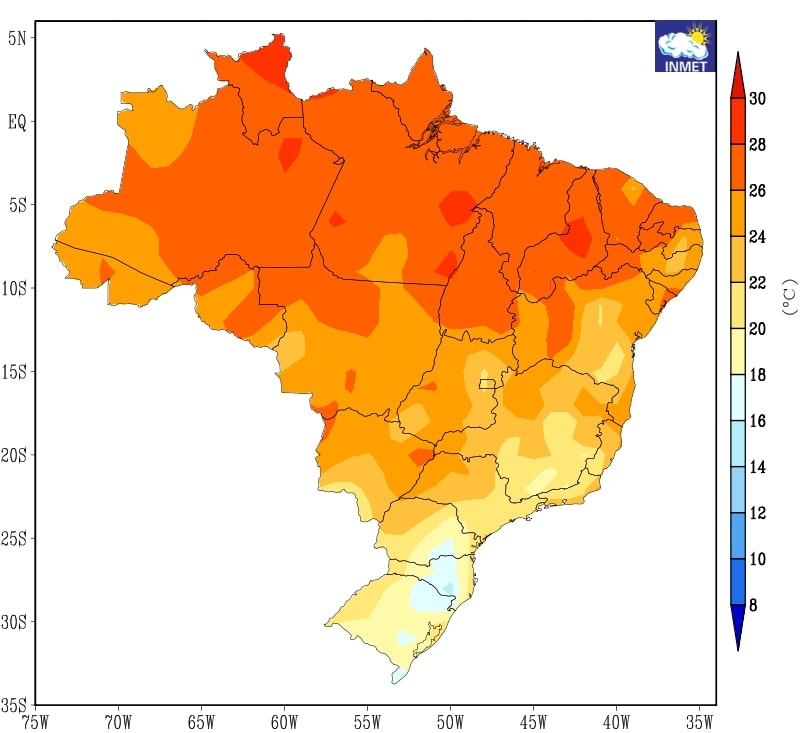}%
    
        }%
    \caption{Plot of the average Brazilian temperature distribution in years $2000$, $2010$, $2018$.}
    \label{anosmapas}
\end{figure}

\begin{equation}
    \label{metrica}
   M_{B}= \frac{1}{N} E \times t_{cpu}
\mbox{ ,} \end{equation}
where $E$  the absolute error between $(D^1 (x_{2000},x_{\gamma}))^2$ and its approximation, and $t_{cpu}$ is the computational time (in seconds) to compute the approximation. We see that even using  small orders, it is possible to obtain a good performance compared to larger  orders.

\begin{table}
\centering
 \begin{tabular}{| c| c|} 
\hline
Year &  Global diffusion distance $(\;^{\circ} \mathrm{C})$ \\
\hline
2010   & 25.5506  \\
\hline
 2018  & 37.5000  \\
\hline
\end{tabular}
\caption{Global diffusion distance in the years 2010 and 2018, with respect to year 2000.}
\label{tabla}
\end{table}

\begin{figure}[!htb]
\minipage{0.32\textwidth}
  \subfloat[$k_2=5$]{%
  \includegraphics[width=\linewidth]{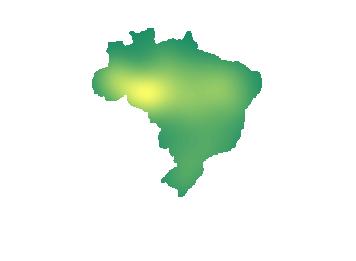}
}%
\endminipage\hfill
\minipage{0.32\textwidth}
    \subfloat[$k_2=100$]{%
  \includegraphics[width=\linewidth]{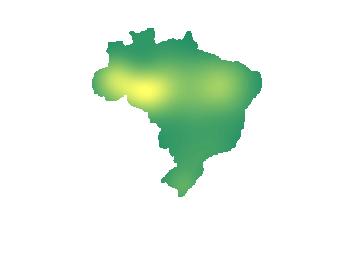}
}%
\endminipage\hfill
\minipage{0.32\textwidth}%
  \subfloat[ $D^1 (x_{2000},x_{2010})^2$]{%
  \includegraphics[width=\linewidth]{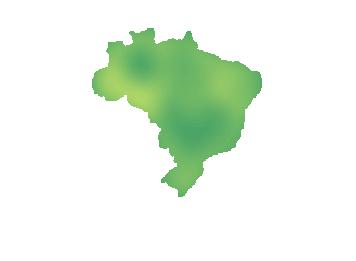}
}%

\endminipage

\caption{Plot of the approximations for the dynamic diffusion distance $(D^1 (x_{2000},x_{2010}))^2$ with different values of $k_2$ (a), (b), and plot of the dynamic diffusion distance (c).}
    \label{experi2010}
    \end{figure}
\begin{figure}[!htb]

\minipage{0.32\textwidth}
  \subfloat[$k_2=5$]{%
  \includegraphics[width=\linewidth]{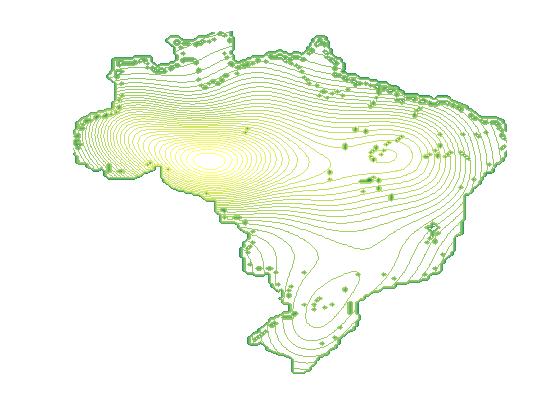}
}%
\endminipage\hfill
\minipage{0.32\textwidth}
    \subfloat[$k_2=100$]{%
  \includegraphics[width=\linewidth]{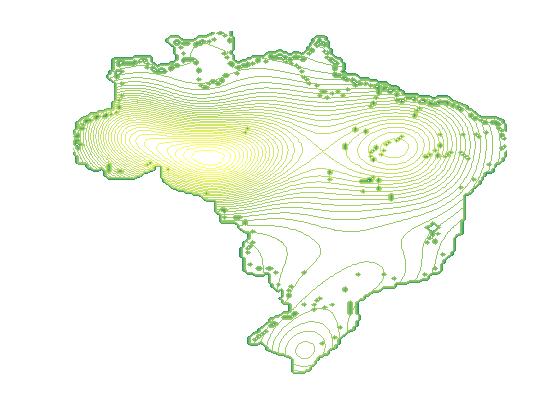}
}%
\endminipage\hfill
\minipage{0.32\textwidth}%
  \subfloat[ $D^1 (x_{2000},x_{2010})^2$]{%
  \includegraphics[width=\linewidth]{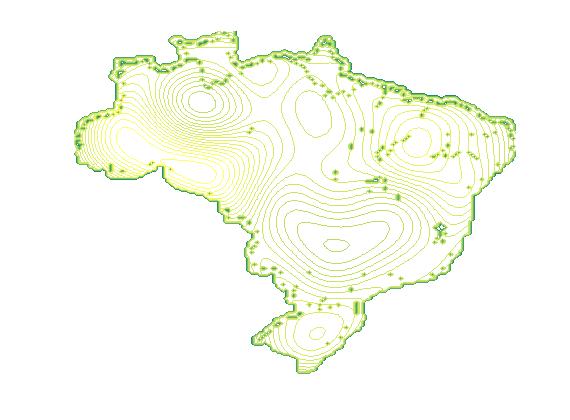}
}%

\endminipage

    \caption{Contour plot of the representation approximation for the diffusion distance $(D^1 (x_{2000},x_{2010}))^2$ taking into account the temperature increase.}
    \label{experi2010map}
\end{figure}

\begin{figure}[!htb]
\minipage{0.32\textwidth}
  \subfloat[$k_2=5$]{%
  \includegraphics[width=\linewidth]{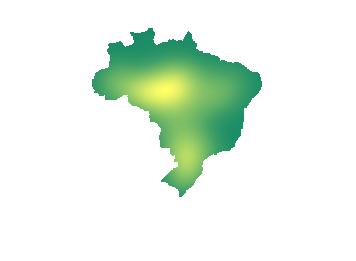}
}%
\endminipage\hfill
\minipage{0.32\textwidth}
    \subfloat[$k_2=100$]{%
  \includegraphics[width=\linewidth]{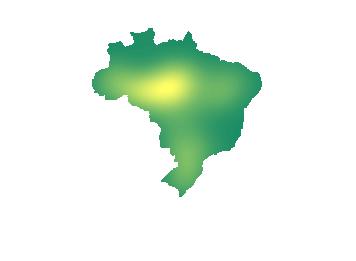}
}%
\endminipage\hfill
\minipage{0.32\textwidth}%
  \subfloat[ $D^1 (x_{2000},x_{2010})^2$]{%
  \includegraphics[width=\linewidth]{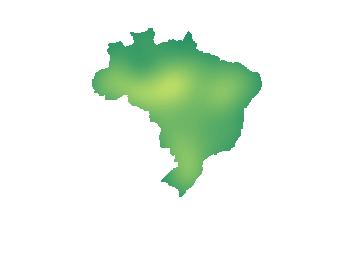}
}%

\endminipage

\caption{Plot of the approximations for the dynamic diffusion distance $(D^1 (x_{2000},x_{2018}))^2$ with different values of $k_2$ (a), (b), and plot of the dynamic diffusion distance (c).}
    \label{experi2018}
    \end{figure}
\begin{figure}[!htb]

\minipage{0.32\textwidth}
  \subfloat[$k_2=5$]{%
  \includegraphics[width=\linewidth]{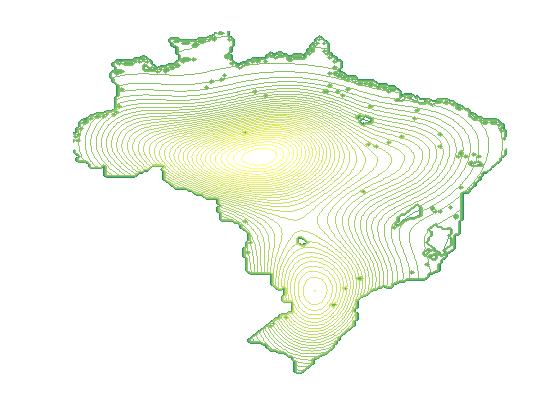}
}%
\endminipage\hfill
\minipage{0.32\textwidth}
    \subfloat[$k_2=100$]{%
  \includegraphics[width=\linewidth]{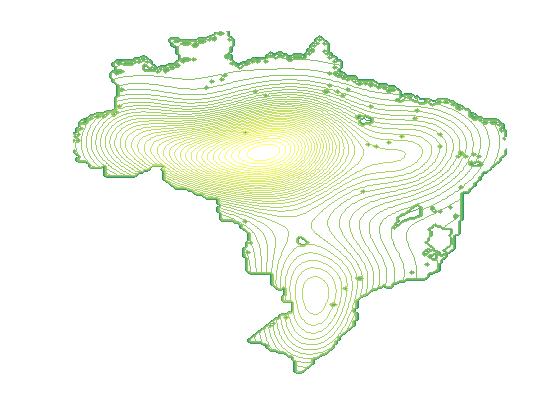}
}%
\endminipage\hfill
\minipage{0.32\textwidth}%
  \subfloat[ $D^1 (x_{2000},x_{2010})^2$]{%
  \includegraphics[width=\linewidth]{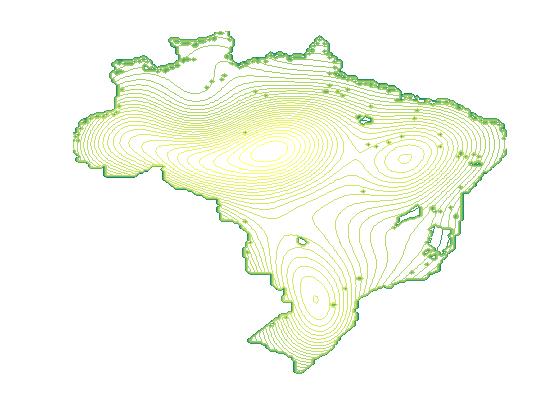}
}%

\endminipage

    \caption{Contour plot of the representation approximation for the diffusion distance $(D^1 (x_{2000},x_{2018}))^2$ taking into account the temperature increase.}
    \label{experi2018map}
\end{figure}


\begin{figure}[h!] 
    \centering
    \subfloat[Error of the approximation]{%
        \includegraphics[width=0.5\textwidth]{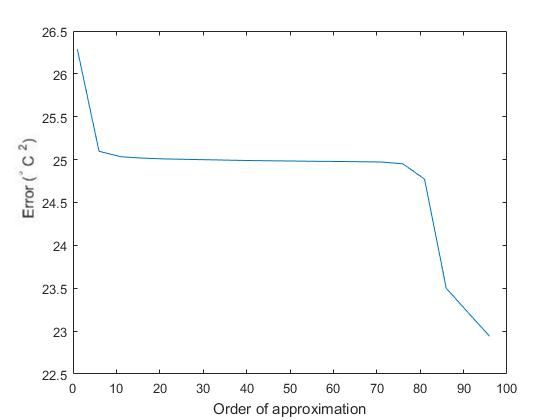}%
     }%
    \subfloat[Computational time]{%
        \includegraphics[width=0.5\textwidth]{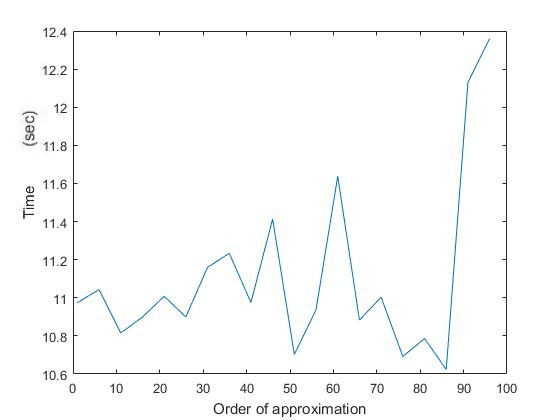}%
  
        }%
        \hfill
        \subfloat[Metric $M_{B}$]{%
        \includegraphics[width=0.5\textwidth]{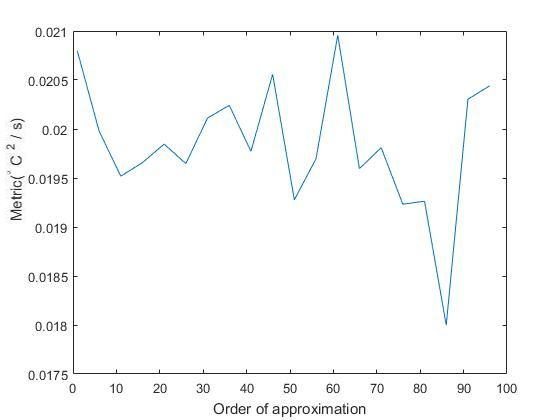}%
    
        }%
    \caption{Plot of the absolute error, computational time, and the metric given by Eq.~(\ref{metrica}) with different orders to approximate $(D^1 (x_{2000},x_{2010}))^2$, for the data set with the temperature distribution in Brazil.
     \label{error2010}
    }
   
\end{figure}

\begin{figure}[h!] 
    \centering
    \subfloat[Error of the approximation]{%
        \includegraphics[width=0.5\textwidth]{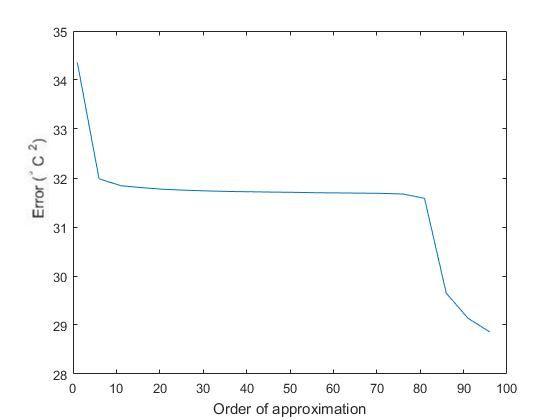}%
     }%
    \subfloat[Computational time]{%
        \includegraphics[width=0.5\textwidth]{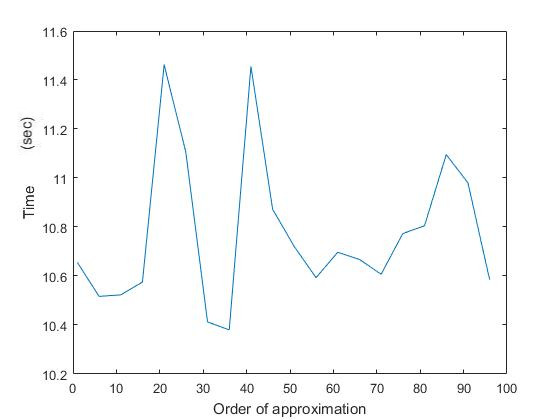}%
  
        }%
        \hfill
        \subfloat[Metric $M_{B}$]{%
        \includegraphics[width=0.5\textwidth]{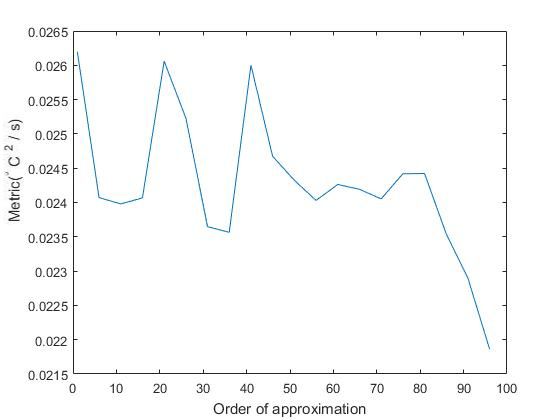}%
    
        }%
    \caption{Plot of the absolute error, computational time, and the metric given by Eq.~(\ref{metrica}) with different orders to approximate $(D^1 (x_{2000},x_{2018}))^2$,  for the data set with the temperature distribution in Brazil.\label{error2018}}
\end{figure}

\section{Conclusions}
\label{conclu}

In this paper, we treat the problem of dimensionality reduction of data sets whose structure is given by asymmetric kernels.
Our methodology generalizes the diffusion-map framework to asymmetric kernels, and computes a diffusion representation based on the kernel coordinates in a proper orthonormal basis. Our representation depends on two parameters, the first parameter defines the approximation error and the second one the dimensionality.
\par In our experiments, we  used the Fourier basis to represent the structure of the data set. This choice is based on the fact that the Fourier basis diagonalizes the Laplacian operator which is the main example of a diffusive process.
From the numerical viewpoint, the main advantage of using the Fourier basis is that the 2d-FFT allows us a reduction from linear growth to logarithmic growth of one of the factors. The latter contributes to the computational complexity reduction 
when compared to traditional eigenvalue methods. In fact, if we consider that our kernel is represented by an $n\times n$ matrix, then the SVD takes  $O(n^3)$ of operations to be performed, whereas the 2d-FFT decomposition is $O(n^2 \log(n))$. This fact was confirmed  in a set of experiments with randomly generated kernels. 
Observe that the SVD representation gives a better approximation, i.e., smaller errors. However, if we use the Fourier basis we can obtain a good approximation of the data set for a much lower computational cost. Additionally, the use of the Fourier basis allows to see in more detail some geometric properties of the data set.
This suggests that it is possible to use the Fourier basis as 
an alternative to the classic representation by eigenvalues, especially in computers with low performance.
\par We perform a few applied experiments to test the theory.
In particular, we  apply it to identify which regions of Brazil have presented a greater variation in the temperature {\em vis a vis} other ones during the last decades. In this experiment, we see that the Amazon region has presented  more variations in its temperature as compared to other places. This observation indicates that further studies should be performed to investigate the possible reasons for such variations.
In order to avoid increasing the computational cost, we did not use the  Markov normalization in this kernel. Due to the high dimensionality of the kernel matrix, the SVD algorithm did not run in our computer for this experiment. However, we managed to execute the algorithm using the 2d-FFT. 
\par We performed also some experiments with synthetic data using a wavelet basis. However, we did not obtain an improvement in the computation time or in the error of the approximation when compared to the Fourier basis and the singular vector basis. 

 \par Asymmetric kernels are present in a number of mathematical models, for instance in weighted directed graphs. In such graphs, the transition from one node to another is measured by an asymmetric kernel. 
 In general, asymmetric kernels are useful to represent a gain or loss of information when we move from one point to another.
 Weighted directed graphs are thus used to model real world problems such as, the traffic in a city, electrical network systems, water flow in hydrological basins, and commodity trading between economies. 
 These are natural follow-up avenues to the present work.
Another natural follow-up  would be the use of other
orthonormal bases. One possibility would be the diagonalizers of the Laplace-Beltrami operator on certain manifolds. An example of such bases is given by spherical harmonics.

\section*{Acknowledgements}
The authors acknowledge the financial support provided by CAPES,  Coordenação de Aperfeiçoamento de Pessoal de Nível Superior (Finance code 001), CNPq, Conselho Nacional de Desenvolvimento Científico e Tecnológico, and FAPERJ, Fundação Carlos Chagas Filho de Amparo à Pesquisa do Estado do Rio de Janeiro.

\section*{References} 
\bibliography{bibli}

\begin{thebibliography}{10}

\bibitem{SLEIJPEN20101100}
Gerard~L.G. Sleijpen, Peter Sonneveld, and Martin~B. {van Gijzen}.
\newblock Bi-cgstab as an induced dimension reduction method.
\newblock {\em Applied Numerical Mathematics}, 60(11):1100--1114, 2010.
\newblock Special Issue: 9th IMACS International Symposium on Iterative Methods
  in Scientific Computing (IISIMSC 2008).

\bibitem{WU2021101}
Gang Wu and Fei Li.
\newblock A randomized exponential canonical correlation analysis method for
  data analysis and dimensionality reduction.
\newblock {\em Applied Numerical Mathematics}, 164:101--124, 2021.
\newblock Special Issue on The Seventh International Conference on Numerical
  Algebra and Scientific Computing.

\bibitem{DAS2004715}
K.~Ch. Das.
\newblock The {L}aplacian spectrum of a graph.
\newblock {\em Computers $\&$ Mathematics with Applications}, 48(5):715 -- 724,
  2004.

\bibitem{6789755}
M.~{Belkin} and P.~{Niyogi}.
\newblock Laplacian eigenmaps for dimensionality reduction and data
  representation.
\newblock {\em Neural Computation}, 15(6):1373--1396, June 2003.

\bibitem{COIFMAN20065}
R.~R. Coifman and S.~Lafon.
\newblock Diffusion maps.
\newblock {\em Applied and Computational Harmonic Analysis}, 21(1):5 -- 30,
  2006.
\newblock Special Issue: Diffusion Maps and Wavelets.

\bibitem{HAJJI20041011}
M.A Hajji, S~Melkonian, and R~Vaillancourt.
\newblock Representation of differential operators in wavelet basis.
\newblock {\em Computers $\&$ Mathematics with Applications}, 47(6):1011 --
  1033, 2004.

\bibitem{SALHOV2018324}
M.~Salhov, A.~Bermanis, G.~Wolf, and A.~Averbuch.
\newblock Diffusion representations.
\newblock {\em Applied and Computational Harmonic Analysis}, 45(2):324 -- 340,
  2018.

\bibitem{SALHOV2015399}
M.~Salhov, A.~Bermanis, G.~Wolf, and A.~Averbuch.
\newblock Approximately-isometric diffusion maps.
\newblock {\em Applied and Computational Harmonic Analysis}, 38(3):399 -- 419,
  2015.

\bibitem{COIFMAN201479}
R.~R. Coifman and M.~J. Hirn.
\newblock Diffusion maps for changing data.
\newblock {\em Applied and Computational Harmonic Analysis}, 36(1):79 -- 107,
  2014.

\bibitem{Coifman7426}
R.~R. Coifman, S.~Lafon, A.~B. Lee, M.~Maggioni, B.~Nadler, F.~Warner, and
  S.~W. Zucker.
\newblock Geometric diffusions as a tool for harmonic analysis and structure
  definition of data: Diffusion maps.
\newblock {\em Proceedings of the National Academy of Sciences},
  102(21):7426--7431, 2005.

\bibitem{MARSHALL2018709}
N.~F. Marshall and M.~J. Hirn.
\newblock Time coupled diffusion maps.
\newblock {\em Applied and Computational Harmonic Analysis}, 45(3):709 -- 728,
  2018.

\bibitem{functio}
M.~Pedersen.
\newblock {\em Functional Analysis in Applied Mathematics and Engineering}.
\newblock CRC Press, 1999.

\bibitem{grafakos2014classical}
L.~Grafakos.
\newblock {\em Classical Fourier Analysis}.
\newblock Graduate Texts in Mathematics. Springer New York, 2014.

\bibitem{steinsingu}
E.~M. Stein.
\newblock {\em Singular Integrals and Differentiability Properties of
  Functions}.
\newblock Princeton University Press, 1970.

\bibitem{Tomjo}
Diario de~cultura.
\newblock {Tom Jobim in Ipanema Beach picture}.
\newblock
  \url{https://www.diariodecultura.com.ar/columnas/crucigrama-antonio-brasileiro/}.
\newblock Accessed: 2020-06-01.

\bibitem{climacamb}
J.~Hansen, R.~Ruedy, M.~Sato, and K.~Lo.
\newblock Global surface temperature change.
\newblock {\em Reviews of Geophysics}, 48(4), 2010.

\bibitem{climacamb1}
Earth observatory.
\newblock {World of Change: Global Temperatures}.
\newblock \url{https://earthobservatory.nasa.gov/world-of-change/DecadalTemp}.
\newblock Accessed: 2020-06-01.

\bibitem{acuepar}
United Nations Framework~Convention on~Climate~Change.
\newblock {The Paris Agreement}.
\newblock
  \url{https://unfccc.int/process-and-meetings/the-paris-agreement/d2hhdC1pcy}.
\newblock Accessed: 2020-06-01.

\bibitem{climain}
INMET.
\newblock {Brazilian temperature dataset}.
\newblock \url{http://www.inmet.gov.br/portal/}.
\newblock Accessed: 2020-06-01.

\end{thebibliography}
\end{document}